\documentclass[12pt]{article}

\usepackage{blindtext}
\usepackage{comment}

\usepackage{algorithm}
\usepackage[noend]{algpseudocode}
\usepackage{amsmath, amssymb, amsthm}
\usepackage{graphicx}
\usepackage{subcaption} % For subfigure labeling (a), (b), etc.)
\usepackage{xcolor}
\usepackage[T1]{fontenc}
\usepackage{relsize}
\usepackage{array}
\usepackage{multirow}
\usepackage[hidelinks]{hyperref}
\usepackage{geometry}
\usepackage{natbib}
\usepackage{authblk}

\newcommand{\R}{\mathbb R}
\newcommand{\E}{\mathbb E}

\newcommand{\s}{\mathcal S}
\newcommand{\A}{\mathcal A}
\newcommand{\B}{\mathcal B}
\newcommand{\h}{\mathcal H}
\newcommand{\T}{\mathcal T}

\newcommand{\lb}[1]{\underline{#1}}
\newcommand{\ub}[1]{\overline{#1}}

\DeclareMathOperator*{\argmin}{argmin}
\DeclareMathOperator*{\argmax}{argmax}
\DeclareMathOperator{\given}{\, | \, }
\DeclareMathOperator*{\eq}{=}

\usepackage{lastpage}
\newtheorem{theorem}{Theorem}[section]
\newtheorem{lemma}[theorem]{Lemma}
\newtheorem{corollary}[theorem]{Corollary}
\newtheorem{proposition}[theorem]{Proposition}

\begin{document}

\title{Distributional Reinforcement Learning via the Cram\'er Distance}

\author[1]{V. Aziz}
\author[1]{I. Nowak}
\author[2]{E.M.T. Hendrix}
\affil[1]{HAW Hamburg, 
\texttt{vaziz.edu@gmail.com}}
\affil[1]{HAW Hamburg, \texttt{ivo.nowak@haw-hamburg.de}}
\affil[2]{Universidad de M\'alaga, \texttt{eligius@uma.es}}

% \author{
% \name V. Aziz \email vaziz.edu@gmail.com \\
%        \addr Institute for Mechatronics Engineering \& Cyber-Physical Systems IMECH.UMA
%        \\
%        Universidad de M\'alaga
%        %(and HAW Hamburg)
%        \\
%        29071 M\'alaga
%        \AND
%        \name I. Nowak \email ivo.nowak@haw-hamburg.de \\
%        \addr Department Maschinenbau und Produktion \\
%        HAW Hamburg\\
%        20099 Hamburg
%        \AND
%        \name E.M.T. Hendrix \email eligius@uma.es \\
%        \addr Computer Architecture\\
%        Universidad de M\'alaga\\
%         29071 M\'alaga
% }

\maketitle

\begin{abstract}
This paper explores the application of the Soft Actor-Critic (SAC) algorithm within a Distributional Reinforcement Learning setting and introduces an implementation of such algorithm named 
Cram\'er-based Distributional Soft Actor-Critic (C-DSAC).  
The novel approach employs distributional reinforcement learning to represent state-action values, and minimizes the squared Cram\'er distance for learning the distribution.
Empirical results across various robotic benchmarks indicate that our algorithm surpasses the performance of baseline SAC and contemporary distributional methods, with the performance advantage becoming increasingly pronounced in high-complexity environments.
To explain the efficiency of the new approach, we conduct an analysis showing that its superior performance is partly due to \textit{confidence-driven} Q-value updates:
High-variance target distributions (low confidence in target) lead to 
more conservative model updates, thereby attenuating the impact of overestimated values. 
This work deepens the understanding of distributional reinforcement learning, offering insights into the algorithmic mechanisms governing convergence and value estimation.
\end{abstract}
% \begin{keywords}
%   distributional reinforcement learning, Cram\'er metric, soft actor critic, robotics,  overestimation bias
% \end{keywords}

\section{Introduction}
\label{sec:intro}
In science and technology, deep reinforcement learning algorithms have been successfully applied to various problems. A notable example is AlphaTensor \citep{Fawzi2022DiscoveringFM}; a framework/agent exhibiting superhuman performance for automating algorithmic discoveries in matrix multiplication. 
Reinforcement learning \citep{Sutton1998intro} is used (among other techniques) in contemporary large language models for alignment \citep{Ouyang2022TrainingLM} and  improving model reasoning \citep{zhang2025surveyreinforcementlearninglarge}.
In robotic tasks, this approach has been explored for data-driven generation of optimal control \citep{Gu2016DeepRL}. Nevertheless, applying reinforcement learning presents significant challenges, primarily due to the high sample complexity and the limited amount of data available in real-world scenarios \citep{Paduraru2021ChallengesOR}. 
What has begun as a promising approach to learn discrete control policies \citep{Mnih2013PlayingAW}, and has since evolved to address general problems and mitigate various limitations  \citep{Hasselt2015DeepRL, Lillicrap2015ContinuousCW, Schulman2015TrustRP, Schulman2017ProximalPO, Fujimoto2018AddressingFA, Haarnoja2018SoftAO}, appears to have reached a developmental standstill in model-free, sample-efficient real-world control.

A more recent approach by \cite{Bellemare2017ADP,bellemare2023DRLbook}, which expands and formalizes the distributional perspective on reinforcement learning,  has revitalized interest in the field, including among researchers from other disciplines \citep{Muller2021DistributionalRL}.

In this paper, we investigate the distributional approach to reinforcement learning and introduce a novel method for variance-inverse gradient weighting using the squared Cram\'er distance. 
We present Cramér-based Distributional Soft Actor-Critic (C-DSAC), a framework that integrates maximum-entropy objectives into a distributional reinforcement learning setting.
Moreover, we derive corresponding formulas for implementation with neural networks and present experimental data demonstrating state-of-the-art performance across tested environments.
In order to explain the efficiency of the new approach, a thorough analysis of the algorithm's dynamics is conducted. 
The focus is on the algorithm's behavior in presence of approximation errors and system noise and equations for its value gradients are derived showing that the adaptation to Q-values is slower at state action pairs where the variance of the distributional Q-model is large, i.e. confidence in its value is low. Furthermore, we emphasize how these equations reveal the algorithm's inherent mechanism for mitigating overestimation bias. 

This paper is organized as follows. Section \ref{sec:relatedwork} reviews the relevant literature. Section \ref{sec:background} introduces basic concepts and notation. Section \ref{sec:concepts} describes the theoretical basis of the C-DSAC algorithm and its properties. In Section \ref{sec:impl}, formulas for implementation with neural networks are derived. Section \ref{sec:num} provides a numerical illustration to showcase the performance of C-DSAC in comparison with the SAC algorithm. Section \ref{sec:concl} summarizes our findings and outlines potential future research directions. 

%---------------------------------------------------------------------------------
\section{Related Work}
\label{sec:relatedwork}
Bellemare et al. \cite{Bellemare2017ADP} provided the first formal analysis of the distributional approach in reinforcement learning, in which the objective is to learn the return distributions rather than their expectations.
They employed the Wasserstein metric to measure distances between return distributions, established convergence guarantees for policy evaluation, and analyzed distributional policy improvement.
It was shown that under mild assumptions, distributional policy evaluation and improvement converged to the optimal policy. 
This work culminated in the C51 algorithm, a critic-only method for discrete action spaces. However,  C51 relies on a heuristic projection step that lacks theoretical alignment with the distributional Bellman operator.
In \cite{Duan2020DistributionalSA} and \cite{Ma2020DistributionalSA}, the concepts of SAC and Distributional Reinforcement Learning were applied simultaneously for the first time.  
The former method relies on the Kullback-Leibler divergence as a probability metric.  This conflicts with the properties required for distributional policy evaluation, as the distance becomes infinite if the distributions have dissimilar  supports. To address this issue, the authors applied aggressive clipping, which negatively impacts performance. Moreover, the authors of this work were unable to reproduce the results using their provided software. 
The work in \cite{Ma2020DistributionalSA} extended the SAC framework by incorporating quantile-regression-based distributional reinforcement learning \citep{Dabney2018ImplicitQN}.
This integration spans several quantile-based approaches, ranging from standard QR-DQN \citep{Dabney2017DistributionalRL} to more complex architectures like IQN \citep{Dabney2018ImplicitQN}  and "Fully Parameterized Quantile Function" \citep{yang2019fqf}. 
The authors reported superior performance compared to common reinforcement learning algorithms in the MuJoCo benchmark environments with the IQN configuration.
While the reliance on IQN entails an expanded parameter set via quantile embedding, it is shown that such complexity is redundant for the novel approach presented in this work. The results indicate that C-DSAC’s inherent risk-aversion enables superior performance through a more streamlined architecture.
The work of \cite{Lhritier2021ACD} focuses on the Cram\'er distance "in the setting of fixed quantile levels" and does not present an algorithm for continuous control.
Nam et al. \cite{nam21a} similarly utilize the squared Cram\'er distance within a Distributional Reinforcement Learning framework, primarily to enhance the stability of on-policy algorithms. Their core contribution, the Sample-Replacement ($SR(\lambda)$) algorithm, establishes a principled distributional generalization of the multi-step $\lambda$-return. In contrast, the present work pivots from on-policy stability to the distinct challenge of overestimation bias inherent in off-policy $TD(0)$ learning.

\section{Background}
\label{sec:background}
Our work builds on maximum-entropy and distributional reinforcement learning. In the following sections, we provide an overview of these topics and introduce our notation. 

\subsection{Reinforcement Learning}
\label{sec:rl}
In reinforcement learning, the interaction of an agent with its environment is modelled as a MDP $\mathcal{M} = (\s,\A, r, P, \gamma)$. In this context, $\s$ is the finite state space, $\A$ denotes the finite action space, $r: \s \times \A \rightarrow [r_{min}, r_{max}]$ is the bounded reward function,
% \att{\footnote{The set of all probability measures on a measurable space $(\Omega, \Sigma)$ is denoted by $Pr(\Omega)$.}}
$P: \s \times \A \rightarrow Pr(\s)$
defines the transition probability dynamics and $\gamma \in (0,1)$ denotes the discount factor.
At each time step $t$, an agent executes an action $a_t \in \A$ from state $s_t \in \s$ according to a policy $\pi$ and receives a reward $r(s_t,a_t)$, transitioning into the next state $s_{t+1} \sim P(\cdot \mid s_t, a_t)$. 
An episodic MDP considers to reach  a terminal state at  $T$  within finite time. 
In this work, the policy is treated as a stochastic quantity, modeled as a probability distribution over the action space  $\pi: \s \rightarrow Pr(\A)$ and $\Pi$ denotes the set of such policies. 
The action is sampled from the policy, i.e. $a_t \sim \pi(\cdot \mid s_t)$.  
The general goal is to find an optimal policy $\pi^*$ that maximizes 
the expected discounted return along a trajectory of interactions, 
\begin{equation}
\begin{aligned}
\label{eq_policyobjective_rl_random}
J(\pi) &:= \E_{\substack{s_0 \sim d_0 \\a_{t} \sim \pi(\cdot \mid s_t) \forall t=0, \hdots, T-1, \\ s_{t+1} \sim P(\cdot \mid s_t,a_t) \forall t=0, \hdots, T-1}} \left[ \sum_{t=0}^{T-1}{\gamma^{t} r(s_t,a_t)} \right] \\
&:= \E_{\pi, P} \left[ \sum_{t=0}^{T-1}{\gamma^{t} r(s_t,a_t)} \right],
\end{aligned}
\end{equation}
where $s_T$ is an absorbing state. 

% \att{i.e., $\pi^* = \argmax_{\pi\in\Pi}J(\pi)$; an optimal policy $\pi^*$ must not be unique.} 
The initial state is obtained by drawing from the distribution $s_0 \sim d_0$.
Assuming discrete spaces, equation \eqref{eq_policyobjective_rl_random} can be written more explicitly as 
\begin{equation}
\label{eq:policyobjevtice_rl_summation}
J(\pi) = \sum_{s \in \s}{\sum_{a \in \A}{\sum_{t=0}^{T-1}}{\Pr(s_t=s \mid \pi, P, s_0 \sim d_0) \pi(a \mid s)\gamma^tr(s,a)}}
\end{equation}
In this setting, the episode has a length of T and thus $J$ optimizes a discounted return in a finite MDP. 
\begin{comment}
Unlike an infinite-horizon MDP, where an upper bound
of $J$ is the maximum value of the geometric series ad infinitum, $\max_{\pi}{J(\pi)} \leq \frac{1}{1 - \gamma} r_{max}$, an upper bound of $J$ on a finite MDP is $\max_{\pi}{J(\pi)} \leq \frac{1 - \gamma^{T}}{1- \gamma} r_{max}$. This raises a concern, as distant future rewards may be effectively ignored. The problem can be avoided by adjusting the discount factor to align with the average episode length. 
Additionally, heuristics such as a hyperparameter search or adherence to established values can be employed \cite{eimer2023hyperparametersinrl}. In that way, accurate value estimations across different environments can be ensured. 
\end{comment}

The optimal policies can be approximated by policy iteration, a process that incorporates policy evaluation and improvement using Q-values. 
A Q-value induced by a policy $\pi$ is defined as
\begin{equation}
\begin{aligned}
\label{eq:q_ac}
Q^{\pi}(s_k,a_k) &:= \E_{\substack{a_t \sim \pi(\cdot \mid s_t) \forall t=k+1, \hdots, T-1 \\ s_{t+1} \sim P(\cdot \mid s_t,a_t) \forall t=k, \hdots, T-1}} \left[ \sum_{t=k}^{T-1}{\gamma^{t-k}r(s_t,a_t)} \mid s_k ,a_k\right] \\
&:= \E_{\pi, P} \left[ \sum_{t=k}^{T-1}{\gamma^{t-k} r(s_t,a_t)} \mid s_k, a_k\right],
\end{aligned}
\end{equation}
and satisfies the recursive Bellman equation
\begin{equation}
    \label{eq:ac_bellman}
    Q^{\pi}(s_t,a_t)= r(s_t, a_t) + \gamma \E_{\substack{ a_{t+1} \sim \pi(\cdot \mid s_{t+1}), \\ s_{t+1} \sim  P(\cdot \mid s_t,a_t)}}[Q^{\pi}(s_{t+1}, a_{t+1})].
\end{equation}
Based on \eqref{eq:ac_bellman}, we define the Bellman operator $\mathcal{T^{\pi}}$,
\begin{equation}
\label{eq:BO2}
\T^{\pi}Q(s_t, a_t) := r(s_t,a_t) + \gamma \E_{\substack{s_{t+1} \sim P(\cdot \mid s_t,a_t), \\ a_{t+1} \sim \pi(\cdot \mid s_{t+1})}}\left[Q(s_{t+1}, a_{t+1})\right].
\end{equation}
For policy evaluation, it can be shown that 
the sequence of value functions $Q_{k+1} := \T^\pi Q_k$, starting from some arbitrary $Q_0$, will exponentially quickly converge to $Q^{\pi}$ as $k$ increases, see Lemma \ref{lem:policyevaluation_rl}.

Policy improvement, Lemma \ref{lem:policyimprovement_rl}, involves exploitation of knowledge about the environment and updating towards higher values,  i.e.
\begin{equation}
\label{eq:policyupdate_rl}
\pi_{k+1}(s_t) = \argmax_{\pi \in \Pi}\E_{a_t\sim \pi(\cdot \mid s_t)} {Q^{\pi_k}(s_t, a_t)}.
\end{equation}

In that manner, the policy will be non-decreasing. The estimation of the state-action value can be improved by better exploration of the state-space. 

Convergence towards the optimal policy $\pi^*$ is guaranteed under policy iteration, see Lemma \ref{lem:policyiteration_rl}.

\subsection{Maximum Entropy Reinforcement Learning}
\label{sec:hrl}
In this setting, the standard objective in Eq.\eqref{eq_policyobjective_rl_random} is augmented with policy entropy to improve exploration 
\begin{equation}
\label{eq:policyobjective_hrl}
\begin{aligned}
    J_h(\pi) &:= \E_{\substack{s_0 \sim d_0 \\a_{t} \sim \pi(\cdot \mid s_t)  \forall t=0, \hdots, T-1, \\ s_{t+1} \sim P(\cdot \mid s_t,a_t) \forall t=0, \hdots, T-1}} \left[ \sum_{t=0}^{T-1}{\gamma^{t} \left( r(s_t,a_t) + \alpha \mathcal{H}(\pi(\cdot \mid \ s_t)) \right]} \right] \\
    &:= \E_{\pi, P} \left[ \sum_{t=0}^{T-1}{\gamma^{t} (r(s_t,a_t) + \alpha \mathcal{H}(\pi(\cdot \mid \ s_t)))} \right],
\end{aligned}
\end{equation}
where
 $\h(\pi(\cdot|s_t)):=\E_{a_t\sim \pi(\cdot|s_t)}[-\log \pi(a_t|s_t)]$
\citep{Haarnoja2018SoftAO}.

Soft policy improvement involves the information projection and update towards the
exponential of the new Q-value,
\begin{equation}
    \label{eq:policy_improvement_sac}
    \pi_{k+1}(\cdot \mid s_t) = \argmin_{\pi(\cdot \mid s_t)}d_{kl} \left( \pi(\cdot \given s_t) \mathrel{\Big|\Big|} \frac{\exp(\frac{1}{\alpha} Q^{\pi_k}(s_t, \cdot))}{W^{\pi_k}(s_t)} \right),
\end{equation}
where $\alpha > 0$, $d_{kl}$ is the Kullback-Leibler divergence function and $W^{\pi_k}(s_t) = \sum_i^{|\A|}{\exp({\frac{1}{\alpha}Q^{\pi_k}(s_t,a_i)})}$ denotes the partition  function normalizing the distribution. $W^{\pi_k}(s_t)$ can be ignored since it does not contribute to the new policy.  
The parameter $\alpha$ represents the impact of policy stochasticity on the state-action value. 

Analogous to the definition of the Bellman operator for standard reinforcement learning, a modified Bellman operator $\T^{\pi}_h$ 
is defined as
\begin{equation}
\label{eq:BOH}
\T_h^{\pi} Q(s_t, a_t):= r(s_t,a_t) + \gamma \E_{\substack{ s_{t+1} \sim P(\cdot \mid s_t,a_t)\\a_{t+1} \sim \pi(\cdot \mid s_{t+1})}}[Q(s_{t+1}, a_{t+1}) + \alpha \mathcal{H}(\pi(\cdot \mid s_{t+1}))].
\end{equation}
The soft state-action value is given by
$$V_h^{\pi}(s_t) := \E_{a_t \sim \pi(\cdot \mid s_t)}[Q^{\pi}(s_t, a_t) + \alpha \mathcal{H}(\pi(\cdot \mid s_{t}))].$$ 
As shown in the Lemma \ref{lem:policyevaluation_hr}, the operator $\T_h^{\pi}$ is a $\gamma$-contraction, analogous to $\T^{\pi}$.

Soft policy iteration is guaranteed to converge to the optimal policy $\pi^*$ (Lemma \ref{lem:policyiteration_sac}), proceeding through alternating steps of soft policy evaluation under the soft Bellman operator $\T_h^{\pi}$ and soft policy improvement (Lemmas \ref{lem:policyimprovement_hrl_i} and \ref{lem:policyimprovement_hrl_ii}).

\subsection{Distributional Reinforcement Learning}
Distributional reinforcement learning (DistRL) shifts the focus from modeling expected returns, as in traditional value-based methods, to capturing the entire distribution of returns. 
Therefore, the state-action value function is treated as random variable. The distribution over the return is defined as
\begin{equation}
\begin{aligned}
\label{eq:z_ac}
Z^{\pi}(s_k,a_k) &:= \sum_{t=k}^{T-1}{\gamma^{t-k} r(s_t, a_t)}, (a_t \sim \pi(\cdot \mid s_t) \forall t=k+1 \hdots T-1, \\ 
& \quad \quad \quad  s_{t+1} \sim P(\cdot \mid s_t,a_t) \forall t=k \hdots T-1 ) \mid  s_k, a_k.
\end{aligned}
\end{equation}
If the reward is treated as a deterministic quantity, then stochasticity in the return distribution arises from the transition kernel and the stochastic policy. Although DistRL was devised originally for aleatoric uncertainty \citep{Bellemare2017ADP}, i.e. the reward is explicitly treated as a random variable, experiments show that distributional variants outperform their scalar counterparts even in environments with deterministic rewards, see Section \ref{sec:num}. 
The expectation recovers the state-action value, 
\begin{equation}
\label{eq:q_z_relation}
Q^{\pi}(s_k,a_k) = \E[Z^{\pi}(s_k,a_k)].
\end{equation}
Despite prior investigations into the mechanisms explaining accelerated learning \citep{Bellemare2017ADP, Wang2023TheBO, Wang24MoreBenefits}, a comprehensive understanding remains elusive. We contribute to this discussion by introducing confidence-driven model updates in Section \ref{sec:stability}.
The distributional counterpart of the Bellman operator in \eqref{eq:BO2} is used for distributional evaluation. 
We define it under the Wasserstein metric as 
\begin{equation}
\label{eq:distBellmanOp}
\T^{\pi}_D Z(s_t,a_t) :\eq^d r(s_t, a_t) + \gamma Z(s_{t+1}, a_{t+1})\given s_{t+1} \sim P(\cdot \mid s_t,a_t), a_{t+1} \sim \pi(\cdot \mid s_{t+1}), 
\end{equation}
where $\eq^d$ indicates that the random variable on both sides are distributed according to the same law.
It can be shown that $\T_D^{\pi}$ is a $\gamma$-contraction \citep{Bellemare2017ADP}, a property needed for distributional policy evaluation.
Theorem \ref{theo:policyevaluation_cdsac} establishes that the $\gamma$-contraction property of the distributional Bellman operator extends to the squared Cram\'er distance.

Bellemare et al. \cite{Bellemare2017ADP} analyze the distributional Bellman optimality operator, demonstrating that convergence to the optimal distribution holds only under the condition of a unique optimal policy or a total ordering on the set of optimal policies. 

\subsection{The Squared Cram\'er Distance}
\label{sec:cramer_reg}
Choosing an appropriate metric to quantify the distance between random variables $U$ and $V$ is critical for policy evaluation and improvement in distributional reinforcement learning. 

%Bellemare et al. 
Bellemare et al. \citep{Bellemare2017TheCD}  noted that a metric must be both "ideal" according to Zolotarev et al. \cite{Zolotarev1976MetricDistances} and suitable for gradient descent methods to be effective in machine learning with distributions.
The squared $L_2$ Cram\'er distance possesses such ideal properties and is suited for machine learning.

To formalize this, consider the proper $L_2$ Cram\'er metric.
For two random variables $U$ and $V$ it is defined as
\begin{equation}
    \label{eq:cramermetric}
    d_{c}(U, V) := \left(\int_{- \infty}^{\infty}{(F_U(x) -F_V(x))^2 dx} \right)^{\frac{1}{2}},
\end{equation}
where $F_U$ and $F_V$ are the cumulative distribution functions of $U$ and $V$ respectively. 
Note that the distance is formally between two laws of the random variables, \\
$d_c(\mathcal{L}(U), \mathcal{L}(V))$, but the common shorthand $d_c(U,V)$ is used to avoid convoluted notation.
Ideal metrics require to be sum invariant and scale sensitive.
The Cram\'er metric is sum invariant, therefore it holds
\begin{equation}
\label{eq:sum_invar_rand}
\tag{C1}
d_c(A+U, A+V) \leq d_c(U,V),
\end{equation}
where $A$ is a random variable independent of $U$ and $V$. 
Treating $A$ as a degenerate random variable with $P(A=a)=1$, this property can be also stated in terms of a constant 
\begin{equation}
\label{eq:sum_invar_const}
d_c(a+U, a+V) \leq d_c(U,V).
\end{equation}
Scale sensitivity, the second required property, ensures that for $k>0$, 
\begin{equation}
\label{eq:scalesensitivity_cr}
\tag{C2}
d_c(kU, kV) = |k|^{\frac{1}{2}}d_c(U,V).
\end{equation}

\begin{lemma}
The  Cram\'er metric  $d_c$ satisfies  properties \eqref{eq:sum_invar_rand} and \eqref{eq:scalesensitivity_cr}. 
\end{lemma}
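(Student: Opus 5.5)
We prove the two properties separately, working directly with the definition \eqref{eq:cramermetric} in terms of cumulative distribution functions. Throughout we assume $d_c(U,V)<\infty$, since otherwise \eqref{eq:sum_invar_rand} is trivial.

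\textbf{Scale sensitivity (C2).} For $k>0$ we have $F_{kU}(x)=\Pr(kU\le x)=F_U(x/k)$, and likewise for $V$. Substituting $y=x/k$, so that $dx=k\,dy$, in the integral defining $d_c(kU,kV)^2$ gives
\begin{equation*}
d_c(kU,kV)^2=\int_{-\infty}^{\infty}\bigl(F_U(x/k)-F_V(x/k)\bigr)^2dx = k\int_{-\infty}^{\infty}\bigl(F_U(y)-F_V(y)\bigr)^2dy = k\, d_c(U,V)^2 .
\end{equation*}
Taking square roots yields $d_c(kU,kV)=|k|^{1/2}d_c(U,V)$ for $k>0$.

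\textbf{Sum invariance (C1).} Let $A$ be independent of $U$ and $V$, with law $\mu_A$. By independence and Fubini, $F_{A+U}(x)=\int F_U(x-a)\,d\mu_A(a)$, and the same holds for $V$. Hence
\begin{equation*}
F_{A+U}(x)-F_{A+V}(x)=\int \bigl(F_U(x-a)-F_V(x-a)\bigr)\,d\mu_A(a).
\end{equation*}
Since $\mu_A$ is a probability measure, Jensen's (equivalently Cauchy--Schwarz) inequality gives
\begin{equation*}
\bigl(F_{A+U}(x)-F_{A+V}(x)\bigr)^2\le \int \bigl(F_U(x-a)-F_V(x-a)\bigr)^2\,d\mu_A(a).
\end{equation*}
We integrate this over $x$ and exchange the order of integration by Tonelli's theorem, which applies because the integrand is nonnegative. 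For each fixed $a$, translation invariance of Lebesgue measure gives $\int (F_U(x-a)-F_V(x-a))^2dx=d_c(U,V)^2$. Integrating against $\mu_A$ then leaves $d_c(U,V)^2$, so $d_c(A+U,A+V)^2\le d_c(U,V)^2$, which is \eqref{eq:sum_invar_rand}.

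The constant case \eqref{eq:sum_invar_const} follows by taking $\mu_A=\delta_a$; in that case the shift argument gives equality.

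The main point requiring care is the measure-theoretic justification in (C1): the convolution formula for the distribution function of $A+U$, and the interchange of integrals. Both are handled by independence together with Tonelli's theorem, since every integrand involved is nonnegative or bounded.
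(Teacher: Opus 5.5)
Your proof is correct and takes essentially the same route as the paper, which only cites Theorem 2 of Bellemare et al.\ (2017); that reference's proof likewise writes $F_{A+U}(x)=\int F_U(x-a)\,d\mu_A(a)$ and applies Jensen's inequality together with an interchange of integrals to get sum invariance, and uses $F_{kU}(x)=F_U(x/k)$ with a change of variables to get scale sensitivity. Your handling of independence, measurability and the Tonelli interchange covers everything that argument needs.
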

\begin{proof}
See Theorem 2 of \cite{Bellemare2017TheCD}.
\end{proof}

A third property is needed to  guarantee unbiased sample gradients. Within the class of $L_p$ Cramér metrics, only the squared Cramér distance ($p=2$), also called the energy distance, satisfies this requirement.
The energy distance is expressed as
\begin{equation}
    \label{eq:energy_distance}
    d_{e}(U, V) := d_c^2 =\int_{- \infty}^{\infty}{(F_U(x) -F_V(x))^2 dx},
\end{equation}
and has the unbiased sample gradient property
\begin{equation}
\label{eq:unbiasedsamplegradients}
\tag{C3}
\E_{\mathbf{X_m} \sim \Gamma}\nabla_{\theta}d_e(\hat{\Gamma}_m, Z_{\theta}) = \nabla_{\theta}d_e(\Gamma, Z_{\theta}),
\end{equation}
where $\mathbf{X}_m$ is a vector with elements $X_1, \ldots, X_m$ drawn from a Bernoulli distribution $\Gamma$, and $\hat{\Gamma}_m := \frac{1}{m} \sum_{i=1}^m \delta_{X_i}$ is
an approximate distribution formed by the samples with $\delta$ being Dirac functions at values $X_i$.  

\begin{lemma}
\label{lem:energy_properties}
The  energy distance  $d_e$ satisfies properties \eqref{eq:sum_invar_rand},\eqref{eq:unbiasedsamplegradients}, and 
\begin{equation}
\label{eq:scalesensitivity_en}
\tag{C4}
d_e(kU, kV) = kd_e(U,V),
\end{equation}
with $k>0$.
\end{lemma}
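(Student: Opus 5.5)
Since $d_e = d_c^2$, properties \eqref{eq:sum_invar_rand} and \eqref{eq:scalesensitivity_en} follow from the preceding lemma for $d_c$ by squaring; I would still verify them directly from the CDF representation \eqref{eq:energy_distance}, because the direct arguments are short.

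\emph{Scale sensitivity \eqref{eq:scalesensitivity_en}.} For $k>0$ we have $F_{kU}(x) = F_U(x/k)$. Substituting $x = ky$ in \eqref{eq:energy_distance} gives
\begin{equation*}
d_e(kU,kV) = \int_{-\infty}^{\infty} \bigl(F_U(y)-F_V(y)\bigr)^2 k \, dy = k\, d_e(U,V).
\end{equation*}

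\emph{Sum invariance \eqref{eq:sum_invar_rand}.} Let $A$ be independent of $U$ and $V$ with law $\mu_A$. Then
\begin{equation*}
F_{A+U}(x) - F_{A+V}(x) = \int \bigl(F_U(x-a) - F_V(x-a)\bigr)\, d\mu_A(a).
\end{equation*}
Jensen's inequality (convexity of the square, with $\mu_A$ a probability measure) gives
\begin{equation*}
\bigl(F_{A+U}(x)-F_{A+V}(x)\bigr)^2 \le \int \bigl(F_U(x-a)-F_V(x-a)\bigr)^2 \, d\mu_A(a).
\end{equation*}
Integrate over $x$ and apply Tonelli to swap the integrals. Each inner integral equals $d_e(U,V)$ by translation invariance of Lebesgue measure, so $d_e(A+U,A+V) \le d_e(U,V)$. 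The constant case \eqref{eq:sum_invar_const} is $\mu_A = \delta_a$, where equality holds.

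\emph{Unbiased sample gradients \eqref{eq:unbiasedsamplegradients}.} This is the main step. The plan is to write $d_e$ in energy-distance form: for $X,X'$ i.i.d.\ from $\Gamma$ and $Y,Y'$ i.i.d.\ from $Z_\theta$, all independent,
\begin{equation*}
d_e(\Gamma, Z_\theta) = \E|X-Y| - \tfrac12 \E|X-X'| - \tfrac12 \E|Y-Y'|,
\end{equation*}
which holds whenever first moments are finite. It is proved by writing $|x-y| = \int \bigl(\mathbf{1}[x\le t<y] + \mathbf{1}[y\le t<x]\bigr)\,dt$ and using Fubini, which turns each expectation into an integral of products of CDFs and their complements; expanding $(F_X - F_Y)^2$ then matches the combination.

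Now replace $\Gamma$ by $\hat\Gamma_m$. The cross term $\E_{Y}\tfrac1m\sum_i |X_i - Y|$ is linear in the empirical measure, so its expectation over $\mathbf{X}_m$ equals $\E|X-Y|$. The $\theta$-gradient then passes through $\E_{\mathbf{X}_m}$ by dominated convergence, assuming $Z_\theta$ is differentiable in $\theta$ with integrable derivative bounds. The term $\tfrac12\E|Y-Y'|$ does not involve the samples at all. The self term $\tfrac12\E|X-X'|$ under $\hat\Gamma_m$ is biased, since it picks up diagonal $i=j$ contributions. However, it does not depend on $\theta$, so its gradient vanishes.

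Hence $\E_{\mathbf{X}_m}\nabla_\theta d_e(\hat\Gamma_m,Z_\theta) = \nabla_\theta d_e(\Gamma,Z_\theta)$. This is Theorem 3 of \cite{Bellemare2017TheCD}, which I would cite for the regularity details. The delicate point is exactly this split: the only $\theta$-dependent term involving the samples is linear in $\hat\Gamma_m$. The Bernoulli assumption on $\Gamma$ is not needed for the argument beyond finite first moments.
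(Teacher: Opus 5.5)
Your proposal is correct. The paper gives no argument of its own here; it simply defers to Appendix A3 of \cite{Bellemare2017TheCD}. Your write-up is therefore a self-contained reconstruction of what that citation stands in for, not a competing route.

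For \eqref{eq:sum_invar_rand} and \eqref{eq:scalesensitivity_en}, your first remark already settles the matter. Squaring the preceding $d_c$ lemma suffices, because $t\mapsto t^2$ is increasing on $[0,\infty)$ and $|k|=k$ for $k>0$. The Jensen--Tonelli and change-of-variables computations are the standard ones. They have the merit of showing that only independence of $A$ from $U$ and from $V$ is used.

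For \eqref{eq:unbiasedsamplegradients}, your split is the right mechanism: a cross term linear in $\hat\Gamma_m$, plus a biased but $\theta$-independent self term. You are also right that the Bernoulli restriction on $\Gamma$ in the paper's formulation plays no role; in fact only the marginal law of each $X_i$ matters.

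Your version buys transparency about hypotheses: finite first moments, and domination to exchange $\nabla_\theta$ with $\E_{\mathbf{X}_m}$. The citation buys brevity but leaves the reader to check that the reference's setting matches the paper's.

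You can shorten the \eqref{eq:unbiasedsamplegradients} step and avoid the energy representation, together with its first-moment hypothesis, by differentiating the CDF form directly. The gradient $\nabla_\theta d_e(\hat\Gamma_m,Z_\theta)=2\int \bigl(F_{Z_\theta}(x)-F_{\hat\Gamma_m}(x)\bigr)\,\nabla_\theta F_{Z_\theta}(x)\,dx$ is affine in $F_{\hat\Gamma_m}$, and $\E\, F_{\hat\Gamma_m}(x)=F_\Gamma(x)$ pointwise. Fubini then gives the identity, using $|F_{Z_\theta}-F_{\hat\Gamma_m}|\le 1$ and integrability of $\nabla_\theta F_{Z_\theta}$ for the Gaussian family. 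The same domination lets you differentiate $d_e(\Gamma,Z_\theta)$ under the integral.
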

\begin{proof}
See Appendix A3 of \cite{Bellemare2017TheCD}. 
\end{proof}
By parity of reasoning with the derivation of Equation \eqref{eq:sum_invar_const}, this identity extends to the case of constant values.

The following Lemma provides the last property necessary to build the algorithm's theoretical framework.
\begin{lemma}   
    Let $d_e$ be the energy distance as defined in \eqref{eq:energy_distance}.
    For any collection of distributions $\{\mu_i, \nu_i\}_{i=1}^k$ and non-negative weights $\{w_i\}$ such that $\mu=\sum_i{w_i\mu_i}$, $\nu=\sum_i{w_i \nu_i}$ and $\sum_i{w_i}=1$, the following inequality holds, where for notational consistency the energy distance is written in terms of variables $U \sim \mu$, $V \sim \nu$ and $U_i \sim \mu_i$, $V_i \sim \nu_i$, respectively
    \begin{equation} 
    \label{eq:energy_convexity} 
    \tag{C5} 
    d_e(U, V) \leq \sum_{i=1}^k{w_i d_e(U_i, V_i)}. 
    \end{equation}
\end{lemma}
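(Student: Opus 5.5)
The plan is to work directly with the CDF representation \eqref{eq:energy_distance} and use two facts: the CDF of a mixture is the mixture of the CDFs, and the square function is convex. Since $\mu=\sum_i w_i\mu_i$ and $\nu=\sum_i w_i\nu_i$, for every $x\in\R$ we have
\begin{equation*}
F_U(x)=\sum_{i=1}^k w_i F_{U_i}(x), \qquad F_V(x)=\sum_{i=1}^k w_i F_{V_i}(x),
\end{equation*}
and hence $F_U(x)-F_V(x)=\sum_i w_i\bigl(F_{U_i}(x)-F_{V_i}(x)\bigr)$. This holds directly from the definition $F_U(x)=\mu((-\infty,x])$ and the linearity of the measure $\mu$ in its mixture components.

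The key step is a pointwise application of Jensen's inequality to $\phi(t)=t^2$ with the probability weights $w_i\ge 0$, $\sum_i w_i=1$. This gives
\begin{equation*}
\bigl(F_U(x)-F_V(x)\bigr)^2=\Bigl(\sum_{i=1}^k w_i\bigl(F_{U_i}(x)-F_{V_i}(x)\bigr)\Bigr)^2\le \sum_{i=1}^k w_i\bigl(F_{U_i}(x)-F_{V_i}(x)\bigr)^2
\end{equation*}
for every $x$. Equivalently, one can use Cauchy--Schwarz with the factorization $w_i=\sqrt{w_i}\sqrt{w_i}$. We then integrate over $x\in\R$. Because all integrands are nonnegative, Tonelli's theorem lets us exchange the finite sum and the integral, which yields $d_e(U,V)\le\sum_i w_i d_e(U_i,V_i)$, i.e.\ \eqref{eq:energy_convexity}.

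The only step needing care is the integration, specifically the case where some $d_e(U_i,V_i)$ is infinite. In that case the right-hand side is $+\infty$ and the inequality holds trivially. Otherwise all integrals are finite, and the pointwise bound integrates to the claim. Nonnegativity of the integrands removes any issue with exchanging the sum and the integral, so there is no genuine obstacle. The result is simply joint convexity of $d_e$ in the pair of laws, inherited from the convexity of the squared $L_2$ norm composed with the linear map from a law to its CDF.
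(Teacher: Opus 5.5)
Your proof is correct, but it takes a different route from the paper. The paper cites Theorem 22 of Sejdinovic et al.\ to identify the energy distance with a constant multiple of the squared RKHS distance between the kernel mean embeddings of $\mu$ and $\nu$ (for the distance-induced kernel). It then concludes from two facts: the mean embedding is linear under mixtures, and the squared Hilbert norm is convex. You instead work in the concrete Hilbert space $L^2(\R)$: the CDF is affine under mixtures, $d_e$ is the squared $L^2$ norm of the CDF difference, and pointwise Jensen plus Tonelli gives \eqref{eq:energy_convexity}.

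In one dimension these are really the same argument. The distance-induced kernel centred at $0$ is realized by the feature map $x\mapsto \mathbf{1}\{t<x\}-\mathbf{1}\{t<0\}$ in $L^2(\R)$, and the differences of its mean embeddings are exactly $-(F_\mu-F_\nu)$.

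Your version is self-contained and slightly more general. It needs no external theorem and no moment assumption; the RKHS identity is formulated for laws with finite first moment, which the paper assumes in Theorem~\ref{theo:policyevaluation_cdsac} anyway. It also handles $d_e=+\infty$ directly from \eqref{eq:energy_distance}. The paper's version does not depend on a CDF representation, so it would carry over verbatim to multivariate returns or to any semimetric space of negative type.

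A minor point: your separate treatment of an infinite $d_e(U_i,V_i)$ is unnecessary. The claim that the right-hand side is then $+\infty$ also holds only when that $w_i>0$. With the convention $0\cdot\infty=0$, the identity $\int\sum_i w_i g_i\,dx=\sum_i w_i\int g_i\,dx$ for nonnegative $g_i$ already covers every case.
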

\begin{proof}
    By Theorem 22 of Sejdinovic et al.  \cite{sejdinovic2013equivalence}, the energy distance equals a constant multiple of the RKHS norm of kernel mean embeddings. Using linearity and standard results in the RKHS yields \eqref{eq:energy_convexity}.
\end{proof}
%---------------------------------------------------------------------------------

%----------------------------------------------------------------

\section{The C-DSAC Framework}
\label{sec:concepts}
Building on the principles of distributional and maximum-entropy reinforcement learning, we introduce the theoretical framework of our proposed algorithm and offer a comprehensive theoretical justification for its approach in this section.

\paragraph{Distributional Maximum Entropy Reinforcement Learning}
%The aim is to find a policy $\pi^*=\argmax_{\pi\in \Pi} J_H(\pi)$ that maximizes the entropy-augmented expectation over the distributional returns.
The C-DSAC objective is equivalent to \eqref{eq:policyobjective_hrl}, with the distributional soft state-action value made explicit,
\begin{equation}
    \label{eq:poliycobjective_cdsac}
    J_h(\pi) = \E_{\substack{s_0 \sim d_0}} \left[ V_h^{\pi}(s_0) \right],
\end{equation}
where
$$V_h^{\pi}(s_t) := \E_{a_t \sim \pi(\cdot \mid s_t)}[\E[Z_h^{\pi}(s_t,a_t)] - \alpha \log{\pi(a_t \mid s_t)}]$$
and
$$
\begin{aligned}
Z_h^{\pi}(s_t,a_t) &= r(s_t,a_t) + \gamma (Z_h^{\pi}(s_{t+1}, a_{t+1}) - \alpha \log{\pi(a_{t+1} \mid s_{t+1})}) \\
&\mid s_{t+1} \sim P(\cdot \mid s_t,a_t), a_{t+1} \sim \pi(\cdot \mid s_{t+1}).
\end{aligned}
$$
The aim is to find a policy $\pi^*=\argmax_{\pi\in \Pi} J_H(\pi)$.
We chose the negative log probability to quantify entropy. 

\paragraph{Policy Evaluation}
Consider a distributional reinforcement learning setting under the squared $L_2$ Cram\'er distance 
where the distribution over returns $Z$ is utilized.
Then the soft distributional Bellman operator is defined as 

\begin{equation}
    \begin{aligned}
    \label{eq:Tdef}
        \T_H^{\pi} Z(s_t, a_t):\eq^d & r(s_t,a_t) + \gamma (Z(s_{t+1},a_{t+1}) - \alpha \log{\pi(a_{t+1} \mid s_{t+1})}) \\
        & \given s_{t+1} \sim P(\cdot \mid s_t,a_t), a_{t+1} \sim \pi(\cdot \mid  s_{t+1}).
    \end{aligned}
\end{equation}
\begin{theorem}
\label{theo:policyevaluation_cdsac}
Fix $1 \leq p < \infty$ and $(s_t, a_t) \in \s \times \A$. Let $\bar{d}_e(Z_1, Z_2) := \sup_{s,a} d_e(Z_1(s,a), Z_2(s,a))$ for two state-action value distributions $Z_1$ and $Z_2$. Assume that for each $(s,a)$, the distributions have finite first moments to attain completeness of the metric space. Further, assume $|\A| < \infty$, $|\s|<\infty$, $0 \leq \gamma < 1$ and $\alpha \geq 0$. Then $\T_H^{\pi}$ is a $\gamma$-contraction in the energy distance distance, i.e.
\begin{equation}
    \label{eq:contraction1}
    \bar{d}_e(\T_H^{\pi}Z_1 , \T_H^{\pi}Z_2) \leq \gamma
    %^{\frac{1}{2}}
    \bar{d}_e(Z_1, Z_2).
\end{equation}
\end{theorem}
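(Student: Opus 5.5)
Fix $(s_t,a_t)$ and write the target of the soft operator as a mixture. Since $\s$ and $\A$ are finite, the law of $\T_H^{\pi}Z_j(s_t,a_t)$, $j=1,2$, is a finite mixture over pairs $(s',a')$ with weights $w_{s',a'} := P(s' \mid s_t,a_t)\,\pi(a' \mid s')$, which are non-negative and sum to one. The mixture components are the laws of
\[
Y_j(s',a') := r(s_t,a_t) - \gamma\alpha\log\pi(a' \mid s') + \gamma Z_j(s',a').
\]
The crucial point is that both $Z_1$ and $Z_2$ are mixed with the \emph{same} weights, because the transition kernel and the policy do not depend on $j$. Hence the convexity property \eqref{eq:energy_convexity} applies with $\mu_i,\nu_i$ the laws of $Y_1(s',a'),Y_2(s',a')$, and gives
\[
d_e\bigl(\T_H^{\pi}Z_1(s_t,a_t),\T_H^{\pi}Z_2(s_t,a_t)\bigr) \le \sum_{s',a'} w_{s',a'}\, d_e\bigl(Y_1(s',a'),Y_2(s',a')\bigr).
\]

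Next we handle each component. For fixed $(s',a')$, the shift $c := r(s_t,a_t) - \gamma\alpha\log\pi(a'\mid s')$ is a deterministic constant; it is finite whenever $w_{s',a'}>0$, and pairs with zero weight simply drop out of the mixture. By the constant version of \eqref{eq:sum_invar_rand} for $d_e$, as noted after Lemma \ref{lem:energy_properties}, removing $c$ does not increase the distance. In fact equality holds, since a translation leaves $\int (F_U-F_V)^2\,dx$ unchanged. Then \eqref{eq:scalesensitivity_en} with $k=\gamma$ yields
\[
d_e\bigl(Y_1(s',a'),Y_2(s',a')\bigr) \le \gamma\, d_e\bigl(Z_1(s',a'),Z_2(s',a')\bigr) \le \gamma\,\bar d_e(Z_1,Z_2).
\]
The case $\gamma=0$ is trivial, because both targets are then the same point mass. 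Summing against the weights, which total one, and taking the supremum over $(s_t,a_t)$ gives \eqref{eq:contraction1}.

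The main obstacle is justifying the mixture step rigorously. We must check that the law of the random variable defined in \eqref{eq:Tdef} is indeed $\sum w_{s',a'}\mathcal{L}(Y_j(s',a'))$. This requires that $Z_j(s',a')$ be sampled independently of $(s_{t+1},a_{t+1})$ given its index, which is the standard reading of the distributional Bellman operator. We must also check that \eqref{eq:energy_convexity} is used with matching weights on both sides. A second point is finiteness: the finite-first-moment assumption ensures that $d_e$ is finite for the shifted and scaled laws, since $\int (F_U-F_V)^2\,dx \le \int |F_U-F_V|\,dx \le \E|U|+\E|V|$. It also ensures the entropy shift keeps the moments finite, so $\bar d_e$ is a well-defined extended metric on the space considered. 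The parameter $p$ in the statement plays no role in this argument.
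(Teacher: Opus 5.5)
Your proof is correct and follows essentially the same route as the paper's: both write the target as a mixture with weights $P(s'\mid s_t,a_t)\,\pi(a'\mid s')$, apply the convexity property \eqref{eq:energy_convexity} with matching weights, drop the reward-plus-entropy shift (a constant once $(s',a')$ is fixed) via \eqref{eq:sum_invar_rand}, extract $\gamma$ via \eqref{eq:scalesensitivity_en}, and bound by the supremum. Your extra remarks go beyond the paper, which leaves these points implicit: independence of the mixture components, the separate $\gamma=0$ case (needed since \eqref{eq:scalesensitivity_en} requires $k>0$), and finiteness of $d_e$ under finite first moments.
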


\begin{proof}
The reward can be augmented with the entropy term, making it a random variable.
Let 
$$R_h(s_t,a_t) := r(s_t,a_t) - \gamma \alpha \log{\pi(a_{t+1} \mid s_{t+1})}\given s_{t+1}\sim P(s_t, a_t), a_{t+1} \sim \pi(\cdot \mid s_{t+1}),$$
then
\begin{align*}
    d_e(&\T_H^{\pi}Z_1, \T_H^{\pi}Z_2) \\
    &= d_e \left(R_h(s_t,a_t) + \gamma Z_1(s_{t+1}, a_{t+1}), R_h(s_t,a_t) + \gamma Z_2(s_{t+1}, a_{t+1}) \right) \\
    & \quad \quad \text{where \ } s_{t+1} \sim  P(\cdot \mid s_t,a_t), a_{t+1} \sim \pi(\cdot \mid s_{t+1}). \\
    & \text{By Property (\ref{eq:energy_convexity}) with $w_{t+1} := P(s_{t+1} \mid s_t, a_t) \pi(a_{t+1} \given s_{t+1})$}\\
    & \leq \sum_{s_{t+1}, a_{t+1}}{ w_{t+1} d_e \left(R_h(s_t,a_t) + \gamma Z_1(s_{t+1}, a_{t+1}), R_h(s_t,a_t) + \gamma Z_2(s_{t+1}, a_{t+1}) \right)}.\\
    & \text{Property (\ref{eq:sum_invar_rand}) holds for $d_e$ (Lemma \ref{lem:energy_properties}) and each state-action pair is fixed} \\
    & \leq \sum_{s_{t+1}, a_{t+1}}{w_{t+1} d_e \left( \gamma Z_1(s_{t+1}, a_{t+1}), \gamma Z_2(s_{t+1}, a_{t+1}) \right)}. \\
    & \text{By Property (\ref{eq:scalesensitivity_en})} \\
    &= \gamma \sum_{s_{t+1}, a_{t+1}}{ w_{t+1} d_e \left( Z_1(s_{t+1}, a_{t+1}), Z_2(s_{t+1}, a_{t+1}) \right)}. \\
    & \text{Since $\sum_i{w_i} = 1$,}\\
    &\leq \gamma \sup_{s,a}d_e(Z_1(s,a), Z_2(s,a))\\
    &= \gamma \bar{d}_e(Z_1,Z_2).
\end{align*}

Taking the supremum over $(s,a) \in \s \times \A$ on both sides yields the contraction
$$\bar{d}_e(\T_H^{\pi}Z_1, \T_H^{\pi}Z_2) \leq \gamma \bar{d}_e(Z_1,Z_2).$$

\end{proof}
Since $\T_H^{\pi}$  is a $\gamma$-contraction in maximal form, it follows from the Banach fixed point theorem:
\begin{corollary}
Given $Z_{k+1} :\eq^d \T_H^\pi Z_k$, the series $\{Z_k\}$ converges 
%under the mild conditions of the 
to the fixed point (distribution)
$Z_h^{\pi}$ in the energy distance, i.e.,
\begin{equation}
\label{eq:cramer_contract}
     \lim_{k\to \infty } \bar{d}_e(Z_k,Z_h^\pi)=0
     \quad\text{  with  }\quad
     \T_H^\pi Z_h^\pi \eq^d Z_h^{\pi}. 
\end{equation}
\end{corollary}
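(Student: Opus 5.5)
The corollary follows by the Banach fixed point theorem applied to $\T_H^\pi$ on a suitable complete metric space. The first step is to fix that space. Take $\mathcal{Z}$ to be the set of maps $Z:\s\times\A\to Pr(\R)$ with finite first moments, as in the assumptions of Theorem \ref{theo:policyevaluation_cdsac}. Equip it with $\bar d_e$, or more safely with $\bar d_c := \sup_{s,a} d_c(Z_1(s,a),Z_2(s,a)) = \bar d_e^{1/2}$. The reason for preferring $\bar d_c$ is that $d_e = d_c^2$ need not satisfy the triangle inequality, whereas $d_c$ is a genuine metric, being an $L_2$ norm of CDF differences. Squaring the contraction \eqref{eq:contraction1} gives $\bar d_c(\T_H^\pi Z_1,\T_H^\pi Z_2) \le \gamma^{1/2}\,\bar d_c(Z_1,Z_2)$, so $\T_H^\pi$ is a $\gamma^{1/2}$-contraction in a true metric. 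Since $\gamma^{1/2}<1$ and convergence in $\bar d_c$ is equivalent to convergence in $\bar d_e$, this suffices for the statement.

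The next step is to check that $\T_H^\pi$ maps $\mathcal{Z}$ into itself. For each $(s_t,a_t)$, the law of $\T_H^\pi Z(s_t,a_t)$ is a finite mixture, because $|\s|,|\A|<\infty$, of shifted and scaled copies of the $Z(s_{t+1},a_{t+1})$. The rewards are bounded and the finitely many entropy terms $-\alpha\log\pi(a_{t+1}\mid s_{t+1})$ are finite for actions in the support of $\pi$. Hence first moments stay finite.

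Completeness is the step I expect to be the main obstacle. Because the state-action space is finite, it suffices to treat a single coordinate, since a sup over finitely many complete coordinates is complete. For one coordinate the space is probability laws on $\R$ with the $L_2$ distance between CDFs. Consider a $d_c$-Cauchy sequence $F_n$. Within this space, $F_n - F_0$ lies in $L_2(\R)$ and converges in $L_2$ to some $G$, so $F := F_0 + G$ is the candidate limit. One must verify that $F$, after modification on a null set, is again a CDF with finite first moment. Monotonicity and right-continuity survive because an $L_2$ limit has an a.e.-convergent subsequence. The tail and first-moment conditions are the delicate part. I would handle them by invoking the paper's standing assumption, namely that the moment condition is imposed precisely to secure completeness. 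Alternatively, I would restrict to the closed subset of laws supported in $[r_{min}/(1-\gamma) - c,\, r_{max}/(1-\gamma)+c]$, where $c$ bounds the discounted entropy contribution. That subset is invariant under $\T_H^\pi$, and on a compact support $L_2$ limits of CDFs are automatically CDFs of laws with finite moments.

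The final step is to apply Banach's theorem. It yields a unique fixed point $Z_h^\pi$ with $\T_H^\pi Z_h^\pi \eq^d Z_h^\pi$, and for any $Z_0$ the iterates satisfy $\bar d_c(Z_k,Z_h^\pi)\le \gamma^{k/2}\bar d_c(Z_0,Z_h^\pi)$. Squaring gives $\bar d_e(Z_k,Z_h^\pi)\le\gamma^k\bar d_e(Z_0,Z_h^\pi)\to 0$, which is \eqref{eq:cramer_contract}. Finally, the fixed point coincides with the return law $Z_h^\pi$ defined earlier in this section, because that law satisfies the defining recursion and the fixed point is unique.
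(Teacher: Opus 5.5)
Your argument is the paper's argument made rigorous. The paper gets the corollary in one line, applying Banach's theorem directly to $\bar d_e$. Your switch to $\bar d_c=\bar d_e^{1/2}$, with rate $\gamma^{1/2}$ and squaring at the end, is a real repair and not just caution. $d_e$ violates the triangle inequality: with $\mu=\tfrac12(\delta_0+\delta_2)$ one has $d_e(\delta_0,\delta_2)=2>1=d_e(\delta_0,\mu)+d_e(\mu,\delta_2)$. So Banach cannot be invoked for $\bar d_e$ as the paper does.

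Your first option for completeness does not work and should be dropped. The paper's finite-first-moment assumption does not make the space complete under $d_c$. Take $X\ge 0$ with $P(X>x)=1/(1+x)$. It has infinite mean, yet $d_e(X,\delta_0)=1$. The truncations $\min(X,n)$ have finite means and satisfy $d_e(\min(X,n),X)=1/(1+n)\to 0$. They are therefore $d_c$-Cauchy in the finite-mean space with no limit there. By contrast, the tail conditions you call delicate are automatic: a nondecreasing $F$ with $F-F_0\in L_2$ must tend to $0$ at $-\infty$ and to $1$ at $+\infty$.

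Your compact-support alternative is correct. Set $H:=\max\{-\log\pi(a\mid s):\pi(a\mid s)>0\}$. Then $\T_H^\pi$ maps laws supported on $[r_{min}/(1-\gamma),\,(r_{max}+\gamma\alpha H)/(1-\gamma)]$ into themselves. That set is closed, hence complete, so Banach yields $Z_h^\pi$ there. For an arbitrary $Z_0$ with finite first moments (your ``for any $Z_0$''), no completeness is needed once the fixed point exists. Since $d_e\le\int|F_U-F_V|\,dx\le\E|U|+\E|V|$, the quantity $\bar d_e(Z_0,Z_h^\pi)$ is finite. The contraction inequality of Theorem~\ref{theo:policyevaluation_cdsac} then gives $\bar d_e(Z_k,Z_h^\pi)\le\gamma^k\bar d_e(Z_0,Z_h^\pi)$, and uniqueness follows the same way.

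Alternatively, work on the set of laws with $d_c(\cdot,\delta_0)<\infty$. Your $L_2$-Cauchy argument proves this set complete verbatim. $\T_H^\pi$ preserves it by (C5), (C4) and $d_e(\delta_c,\delta_0)=|c|$, combined with the relaxed triangle inequality $d_e(U,W)\le 2\bigl(d_e(U,V)+d_e(V,W)\bigr)$.
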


\paragraph{Policy Improvement}
Soft policy improvement is achieved via information projection, where the policy is updated toward the Boltzmann distribution defined by the expectation of the distributional soft state-action function $Z_h^{\pi}$, 
\begin{equation}
    \label{eq:policyimprovement_cdsac}
    \pi_{k+1}(\cdot \mid s_t) = \argmin_{\pi(\cdot \mid s_t)} d_{kl}\left(\pi(\cdot\given s_t) \ 
    \mathlarger{||} \ \frac{\exp(\frac{1}{\alpha}\E[Z_h^{\pi_k}(s_t, \cdot)])}{W^{\pi_k}(s_t)}\right).
\end{equation}
The proof follows analogously to the soft policy improvement established in Lemmas \ref{lem:policyimprovement_hrl_i} and \ref{lem:policyimprovement_hrl_ii}, with the distinction that the update is performed with respect to the expectation over the distribution.

\paragraph{Policy Iteration}
The proof parallels the classical reinforcement learning argument, adjusting for the stochastic policy update via an expectation over the action distribution.

\begin{theorem}
\label{theorem:policyiteration_cdsac}
Let the reward be bounded, $|\A| < \infty$, $|\s|< \infty$, $0 \leq \gamma < 1$ and $\alpha >0$. Let $\Pi$ be a set of all stationary, stochastic policies. 
Alternating between exact distributional soft policy evaluation and global soft policy improvement from some initial policy $\pi_0 \in \Pi$ and soft state-action function $\E[Z_h^{(0)}]$, the process converges (in the limit) to the optimal policy $\pi^*$, satisfying $\E[Z_h^{\pi^*}] = \E[Z_h^*]$. 
\end{theorem}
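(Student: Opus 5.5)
The plan follows the standard soft policy iteration argument (Lemma \ref{lem:policyiteration_sac}), reducing every distributional statement to a statement about expectations. Define $Q_h^{\pi} := \E[Z_h^{\pi}]$. Taking expectations in the fixed-point equation of the Corollary, $\T_H^\pi Z_h^\pi \eq^d Z_h^\pi$, and using linearity gives
\[
Q_h^{\pi}(s_t,a_t) = r(s_t,a_t) + \gamma \E_{s_{t+1},a_{t+1}}\bigl[Q_h^{\pi}(s_{t+1},a_{t+1}) - \alpha \log \pi(a_{t+1}\mid s_{t+1})\bigr].
\]
So $Q_h^\pi$ is exactly the fixed point of the scalar soft Bellman operator $\T_h^\pi$ of \eqref{eq:BOH}. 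It is also uniquely determined, because the fixed point $Z_h^\pi$ is unique in $\bar d_e$ and the energy distance is a metric on laws with finite first moment, so it identifies the law and hence its mean. Exact distributional evaluation (Theorem \ref{theo:policyevaluation_cdsac} together with the Corollary) therefore yields exactly the soft Q-function required by the scalar theory.

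Next comes monotone improvement. The update \eqref{eq:policyimprovement_cdsac} is identical to \eqref{eq:policy_improvement_sac} once $Q^{\pi_k}$ is replaced by $Q_h^{\pi_k}=\E[Z_h^{\pi_k}]$. Lemmas \ref{lem:policyimprovement_hrl_i} and \ref{lem:policyimprovement_hrl_ii} then give $Q_h^{\pi_{k+1}}(s,a)\ge Q_h^{\pi_k}(s,a)$ for all $(s,a)$. The argument is the usual one: the KL minimisation yields $\E_{\pi_{k+1}}[Q_h^{\pi_k} - \alpha\log\pi_{k+1}] \ge V_h^{\pi_k}$, and unrolling the soft Bellman equation then gives the pointwise inequality. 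Since rewards are bounded, $|\A|<\infty$ and $\gamma<1$, the entropy is bounded by $\log|\A|$. Hence $Q_h^{\pi_k}$ is bounded above by $(r_{\max}+\gamma\alpha\log|\A|)/(1-\gamma)$. Being monotone and bounded, the sequence converges pointwise to some limit $Q_h^{\pi_\infty}$, where the finite state-action space lets us extract a convergent subsequence of policies in the compact simplex.

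The main obstacle is identifying the limit as optimal. Here I would argue as in Haarnoja et al. At the limit, $\pi_\infty$ is a fixed point of the improvement step. For every $\pi\in\Pi$ this gives $d_{kl}(\pi_\infty\|\exp(Q_h^{\pi_\infty}/\alpha)/W)\le d_{kl}(\pi\|\cdot)$, which is equivalent to $V_h$-type inequalities. Repeating the unrolling argument with $\pi$ in place of $\pi_{k+1}$ yields $Q_h^{\pi_\infty}\ge Q_h^{\pi}$ for all $\pi$, and hence $\pi_\infty=\pi^*$ and $\E[Z_h^{\pi^*}]=\E[Z_h^*]$.

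The delicate point is the continuity needed to pass to the limit, namely that the Boltzmann map $Q\mapsto \exp(Q/\alpha)/W$ is continuous (which requires $\alpha>0$) and that $\pi\mapsto Q_h^\pi$ is continuous on finite spaces. I would handle this by noting that both hold in finite dimensions, so that the limiting policy is indeed the Boltzmann policy of its own soft Q-function. The statement concerns only expectations, and this matters: convergence of the full distributions need not hold, in line with the remark of Bellemare et al.\ on non-unique optimal policies.
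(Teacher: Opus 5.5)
Your proof is correct and follows the paper's skeleton: exact evaluation makes $\E[Z_h^{\pi_k}]$ well defined, monotone improvement plus boundedness gives a pointwise limit, and continuity of the Boltzmann map and of $\pi\mapsto\E[Z_h^{\pi}]$ makes the limit policy the Boltzmann policy of its own soft Q-function. You differ from the paper in how you conclude optimality.

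The paper stops at $\pi^*=\text{softmax}(\E[Z_h^{\pi^*}])$ and asserts that $\E[Z_h^{\pi^*}]$ therefore satisfies the soft Bellman optimality equation and equals $\E[Z_h^*]$. That step relies on a fact the paper never proves: the log-sum-exp optimality backup has a unique fixed point, and this fixed point is the optimal soft value. You instead use that the limit policy minimises the KL projection against every $\pi\in\Pi$, i.e.\ $V_h^{\pi_\infty}(s)\ge\E_{a\sim\pi}[Q_h^{\pi_\infty}(s,a)-\alpha\log\pi(a\mid s)]$. You then unroll as in Lemma~\ref{lem:policyimprovement_hrl_ii}, with $\pi$ as the unrolled policy and the inequalities reversed, to get $Q_h^{\pi_\infty}\ge Q_h^{\pi}$ for all $\pi$. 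This costs a second unrolling, but it is self-contained given the appendix and gives optimality directly as dominance over all stationary policies.

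You also spell out two steps the paper compresses into ``by the policy improvement argument'' and ``because the reward is bounded''. First, taking expectations in $\T_H^\pi Z_h^\pi\eq^d Z_h^\pi$ shows that $\E[Z_h^\pi]$ is the fixed point of $\T_h^\pi$, which is what lets Lemmas~\ref{lem:policyimprovement_hrl_i} and~\ref{lem:policyimprovement_hrl_ii} apply. Second, you give the entropy-inclusive bound $(r_{\max}+\gamma\alpha\log|\A|)/(1-\gamma)$.

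Two small points. The compactness and subsequence extraction is unnecessary: $\pi_{k+1}$ is the Boltzmann policy of $\E[Z_h^{\pi_k}]$, so continuity alone gives convergence of the whole policy sequence, as the paper uses. Your closing caveat about Bellemare et al.\ does not fit this setting: for $\alpha>0$ the soft-optimal policy is the unique Boltzmann policy of $\E[Z_h^*]$, so the non-uniqueness issue does not arise.
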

\begin{proof}
At each iteration $k$, the policy $\pi_k$ is fixed during evaluation. By Theorem \ref{theo:policyevaluation_cdsac} and its Corollary, $\T_H^{\pi_k}$ is a $\gamma$-contraction in the energy distance and the evaluation sequence converges to the fixed point $Z_h^{\pi_k}$. Therefore, $\E[Z_h^{\pi_k}]$ is well defined at each iteration.
By the policy improvement argument, the sequence $\{\E[Z_h^{\pi_k}(s_t,a_t)]\}$ is non-decreasing for each pair $(s_t,a_t)$. Because the reward is bounded and $|\s| < \infty$, $\A < \infty$, the sequence converges to some point-wise limit $\bar{L}$.
Policies are updated by the $\text{softmax}$ of the current $\E[Z_h]$ and softmax is continuous, therefore the limit point $\pi^*$ exists and is the softmax of $\bar{L}$, 
i.e. $\pi^*=\text{softmax}(\bar{L})$.
Since $\pi^* \in \Pi$, Theorem \ref{theo:policyevaluation_cdsac} equally guarantees well-definedness in the limit point $\E[Z_h^{\pi^*}]$.
The map $\pi \mapsto \E[Z_h^{\pi}]$ is continuous, hence $\pi_k \rightarrow \pi^*$ implies $\E[Z_h^{\pi_k}] \rightarrow \E[Z_h^{\pi^*}]$ and therefore $\bar{L} = \E[Z_h^{\pi^*}]$. 
It follows $\pi^* = \text{softmax}(\E[Z_h^{\pi^*}])$ and therefore $\E[Z_h^{\pi^*}]$ satisfies the Bellman optimality equation, thus $\E[Z_h^{\pi^*}] = \E[Z_h^*]$.
\end{proof}

The theorems above establish the theoretical foundation for our Cram\'er-based Distributional Soft Actor-Critic (C-DSAC) approach. In Section \ref{sec:impl} we use these insights to construct the C-DSAC algorithm.

%
%----------------------------------------------------------------
\section{Implementation and Analysis}
\label{sec:impl}
In this section, the theoretical results in the previous sections are utilized %our previous work 
to derive  parameterized objective functions suitable for implementation. 
To handle the large state and action spaces in complex tasks, neural networks are used as parameterized function approximators; the aim is to optimize their parameters w.r.t the objective. 
The parameters for the random variable and policy will be denoted by $\theta$ and $\phi$, respectively. 
C-DSAC is implemented as a fixed-moment algorithm, i.e. the underlying distribution is fully characterized by its first two moments.
Each return distribution is obtained by shifting and rescaling a single standardized density. 
It is assumed that the random variable obeys a Gaussian distribution with expectation $Q_{\theta_1}$ and standard deviation $\sigma_{\theta_2}$. 
By constraining the return distribution to this simplified representation, fixed-moment distributional reinforcement learning algorithms operate under strong assumptions about the underlying true distributions of returns.
The fixed-moment approach offers computational simplicity, but it comes at the potential cost of accuracy. The rigid assumptions can (and generally will) lead to biased approximations of the true return distribution. However, this error introduces an intriguing tradeoff between variance and bias in learning. By reducing the complexity of the distribution - requiring fewer parameters for approximation - fixed-moment algorithms can yield lower gradient variance during optimization. The reduction in variance can enhance learning stability and efficiency, making these algorithms appealing in scenarios where computational resources are limited.

To understand the factors contributing to the state-of-the-art performance of C-DSAC, we analyze the impact of the energy distance loss on its training dynamics.
As discussed in the following sections, C-DSAC has an inherent mechanism to counteract overestimation bias.

\subsection{Objective Functions and Algorithm}
\label{sec:objectives}
Formulas are derived for implementation with parameterized function approximators, utilizing neural networks for their high generalization capabilities. Previously observed states and actions are stored in a replay buffer $\B$. Following \cite{Fujimoto2018AddressingFA}, a critic target network with parameters $\bar{\theta}$ is employed for the critic to further reduce approximation errors.

\paragraph{Evaluation}
The neural network parameters of $Z_{\theta}$ are optimized by minimizing the energy distance relative to the soft distributional Bellman target
\begin{equation}
J_Z(\theta) := \E_{\substack{s_t \sim d_{\mu}^{\gamma} \\ a_t \sim \mu(\cdot \mid s_t) \\ s_{t+1} \sim P(\cdot \mid s_t,a_t)}}[d_e(\T_H^{\pi}Z(s_t,a_t), Z_{\theta}(s_t,a_t))],\end{equation}
where $d_{\mu}^{\gamma}=\frac{1 - \gamma}{1-\gamma^T}\sum_{t=0}^{T-1}{\gamma^t Pr(s_t=s \mid \mu)}$ is the state-visitation distribution induced by the stochastic behavior policy $\mu$.

In practice, gradients are estimated using a mini-batch $\hat{\B}$ sampled uniformly from the replay buffer
\begin{equation}
\label{cdsac_policyevaluation_objective}
    \hat{\nabla}_{\theta} J_Z(\theta) = \frac{1}{|\hat{\B}|}\sum_{(s_t, a_t, r_t, s_{t+1}) \in \hat{\B}}[\nabla_{\theta} d_e(\hat{\T}_H^{\pi} Z_{\bar{\theta}}(s_t,a_t),
    Z_{\theta}(s_t,a_t))], 
\end{equation}
where the empirical soft distributional Bellman operator for a fixed $s_{t+1}$ is defined as
$$\hat{\T}_H^{\pi} Z_{\bar{\theta}}(s_t, a_t) := r(s_t,a_t) + \gamma (Z_{\overline{\theta}}(s_{t+1}, a_{t+1}) - \alpha \log{\pi(a_{t+1} \mid s_{t+1})}) \mid a_{t+1} \sim \pi(\cdot \mid s_{t+1})$$ and $\bar{\theta}$ are frozen parameters (for clarity, $\bar{\theta}$ will be omitted in the subsequent sections).
The target network parameters are not updated via gradient descent; instead, it tracks the online network through Polyak averaging. 

The probability density function is Gaussian denoted by
 $$\varphi_{Q, \sigma}(x)=\frac{1}{\sigma \sqrt{2\pi}} e^{ -\frac{1}{2} \left(\frac{(x-Q)}{\sigma}\right)^2}$$
regarding a mean value   $Q\in [\lb Q,\ub Q]\subset \R$ and a variance $\sigma\in [\lb \sigma,\ub \sigma]\subset \R_+ $.  For terminal states, the standard deviation of the target is set to $0$. 
The cumulative distribution function of $\varphi_{Q, \sigma}$ is denoted by 
$$
F_{Q, \sigma}(x)=\int_{-\infty}^{x}\varphi_{Q, \sigma}(t)dt.
$$
Let $Q_{\theta_1}(s_t, a_t)$ and $\sigma_{\theta_2}(s_t, a_t)$ be neural network models of $Q$ and $\sigma$ over ${\cal S}\times {\cal A}$ respectively. Then $Z_{\theta}(s_t, a_t)$ denotes a parameterized Gaussian value distribution with parameters $\theta = (\theta_1, \theta_2)$, defined by the distribution function 
$$f_{Z_{\theta}(s_t, a_t)}=\varphi_{Q_{\theta_1}(s_t, a_t), \sigma_{\theta_2}(s_t, a_t)}. $$

\paragraph{Improvement}
The policy improvement follows from the information projection equation \eqref{eq:policyimprovement_cdsac}.  
Similar to \cite{Haarnoja2018SoftAO} and \cite{Kingma2013AutoEncodingVB}, 
a reparameterized policy is used to obtain the actions,
$a_t = g_{\phi}(s_t; \xi_t)$, where $\xi_t \sim \mathcal{N}(0,1)$ and $\mathcal{N}(0,1)$ is the standard normal distribution. The policy $\pi_{\phi}(\cdot \mid s_t)$ is defined as the pushforward of $g_{\phi}(s_t;\cdot)$ under the law of $\xi_t$.
The policy loss can then be defined as 
\begin{equation}
\label{eq:cdsac_policyimprovement_objective}
    J_{\pi}(\phi) := \E_{\substack{s_t \sim d_{\mu}^{\gamma}, \\ \xi_t \sim \mathcal{N}(0,1) \\ a_t=g_{\phi}(s_t;\xi_t)}} [\alpha \log{\pi_{\phi}(a_t \mid s_t)} - \E[Z_{\theta}(s_t, a_t)]],
\end{equation} 
where $d_{\mu}^{\gamma}=\frac{1 - \gamma}{1-\gamma^T}\sum_{t=0}^{T-1}{\gamma^t Pr(s_t=s \mid \mu)}$ is the state-visitation distribution induced by the stochastic behavior policy $\mu$. The gradient of the empirical loss regarding a mini-batch $\hat{\B} \subset \B$  is thus
\begin{equation}
    \begin{aligned}
    \label{actor_gradient}
    \hat{\nabla}_{\phi}J_{\pi}(\phi) =& \frac{1}{|\hat{\B|}} \sum_{\substack{s_t \in \hat{\B}, \\\xi \sim \mathcal{N}(0,1)}}\alpha \partial_{\phi}\log{\pi_{\phi}(a_t \given s_t)}|_{a_t} +\\
    &(\alpha \nabla_{a_t} \log{\pi_{\phi}(a_t \given s_t)} - \nabla_{a_t} \E[Z_{\theta}(s_t, a_t)])\nabla_{\phi}g_{\phi}(s_t;\xi).
    \end{aligned}
\end{equation}
Similar to the distributional value function, the action distribution is modeled as a Gaussian. Additionally, 
the law of $\xi$ for each action component is set to be a standard Gauss distribution.

\paragraph{Algorithm}
The C-DSAC algorithm is presented in Algorithm \ref{alg:cdsac}, with the Cram\'er gradient update described in line \ref{line:theta}.  

\begin{algorithm} 
\caption{C-DSAC (Cram\'er-based Distributional Soft Actor-Critic)}\label{alg:cdsac}
\begin{algorithmic}[1]
%\Function{DSAC2}{}
\State Initialize parameters $\theta$, $\bar{\theta}$, $\phi$
\State Initialize learning rates $\beta_{Z}$, $\beta_{\pi}$
\State $\B\leftarrow \emptyset$
% \State $k\leftarrow 0$
\Repeat
    \State Receive initial observation state $s_0$ 
    \For {\textbf{each} environment step}
        \State Select action $a_t \sim \pi_{\phi}(\cdot\given s_t)$
        \State Observe transition $s_{t+1} \sim P(\cdot \mid s_t,a_t)$
        \State Observe reward $r_t$ 
        \State $\B \leftarrow  \B \cup \{(s_t,a_t,r_t,s_{t+1})\}$  
    \EndFor
    \For {\textbf{each} gradient step}
        \State Uniformly sample a mini-batch of $N$ transitions $(s_i, a_i, r_i, s_{i+1})$ 
        \State $\theta \leftarrow \theta + \beta_{Z} \hat{\nabla}_{\theta}J_{Z}(\theta)$, (Eq. \eqref{cdsac_policyevaluation_objective}) \label{line:theta}
        \State $\phi \leftarrow \phi + \beta_{\pi} \hat{\nabla}_{\phi} J_{\pi}(\phi)$, (Eq. \eqref{actor_gradient})
        \State \text{Update target network}: $\bar{\theta} \leftarrow \tau \theta + (1 - \tau)\bar{\theta}$
    \EndFor
\Until{Stopping criterion} \label{line:epend}
\State \Return{$Z_{\theta}, \pi_{\phi}$}

\end{algorithmic}
\end{algorithm}  

%----------------------------------------------------------------

%\subsection{Gradient Stability}
\subsection{Confidence-Driven Value Update}
\label{sec:stability}
Fix a target distribution $\varphi_{Q',\sigma'}$, where $Q'$ and $\sigma'$  represent the Gaussian parameters.
The cost function $C(Q, \sigma)$ measures the energy distance between a candidate distribution and this target
$$C(Q,\sigma):=d_e(\varphi_{Q,\sigma},\varphi_{Q',\sigma'})= \int_{- \infty}^{\infty}{\left(F_{Q, \sigma}(x) - F_{Q',\sigma'}(x)\right)^2} dx.
$$

\begin{lemma}
\label{lem:1}
Let   $\varphi_{Q,\sigma}$ and $\varphi_{Q',\sigma'}$ be a current and a target distribution, respectively. Then
\begin{equation}
    \frac{\partial}{\partial  Q} C(Q,\sigma) =
    -\frac{2}{\sigma }
    B(Q,\sigma),
%    -\frac{1}{\sigma } \cdot 
%    \frac{B(Q,\sigma)}{C(Q,\sigma)},
\end{equation}
where
\begin{equation}
\label{eq:Bdef}
B(Q,\sigma):= \int_{- \infty}^{\infty}
{ (F_{Q, \sigma}(x) -F_{Q',\sigma'}(x)) \varphi_{Q,\sigma}(x) dx}.
\end{equation}
\end{lemma}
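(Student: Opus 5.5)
The plan is a direct computation: differentiate under the integral sign in the definition of $C(Q,\sigma)$ and then identify the derivative of the Gaussian CDF with respect to its location parameter. Since the target $F_{Q',\sigma'}$ does not depend on $Q$, the chain rule gives, formally,
\begin{equation*}
\frac{\partial}{\partial Q} C(Q,\sigma) = \int_{-\infty}^{\infty} 2\bigl(F_{Q,\sigma}(x)-F_{Q',\sigma'}(x)\bigr)\,\frac{\partial}{\partial Q}F_{Q,\sigma}(x)\,dx .
\end{equation*}
Everything then reduces to computing $\partial_Q F_{Q,\sigma}(x)$ and justifying the interchange of derivative and integral.

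For the first part, I will write $F_{Q,\sigma}(x)=\Phi\bigl((x-Q)/\sigma\bigr)$ with $\Phi$ the standard normal CDF and $\phi_0=\Phi'$. The chain rule gives
\begin{equation*}
\partial_Q F_{Q,\sigma}(x)=-\tfrac{1}{\sigma}\,\phi_0\bigl((x-Q)/\sigma\bigr).
\end{equation*}
The factor $-\tfrac{1}{\sigma}$ produced here is what the lemma pulls out in front of $B$. Substituting into the display above yields $-\tfrac{2}{\sigma}\int (F_{Q,\sigma}-F_{Q',\sigma'})\,\phi_0\bigl((x-Q)/\sigma\bigr)\,dx$, which is $-\tfrac{2}{\sigma}B(Q,\sigma)$ provided the weight in \eqref{eq:Bdef} is read as the standardized density evaluated at $(x-Q)/\sigma$.

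Care is needed with this normalization convention. The paper's $\varphi_{Q,\sigma}$ already contains the factor $1/\sigma$, so $\varphi_{Q,\sigma}(x)=\tfrac1\sigma\phi_0((x-Q)/\sigma)$. If $B$ is taken literally with the normalized density $\varphi_{Q,\sigma}$, the same computation gives $\partial_Q F_{Q,\sigma}=-\varphi_{Q,\sigma}$ and hence $\partial_Q C=-2B$, without the extra $1/\sigma$. In the write-up I will make this convention explicit. Under the standardized-density reading of $B$ the $\tfrac{1}{\sigma}$ prefactor is exact; under the literal reading it must be absorbed into $B$. Either way, the qualitative message of Section~\ref{sec:stability} is unchanged: the $Q$-gradient is a $\sigma$-weighted mismatch between the CDFs.

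The step I expect to need the most care is justifying differentiation under the integral over the unbounded domain. I will use dominated convergence on a neighbourhood $|Q-Q_0|\le\delta$.
\begin{itemize}
\item The derivative of the integrand is bounded in absolute value by $2|\partial_Q F_{Q,\sigma}(x)|\le 2\varphi_{Q,\sigma}(x)$, because $|F_{Q,\sigma}-F_{Q',\sigma'}|\le 1$.
\item For $Q$ in that neighbourhood, $\varphi_{Q,\sigma}(x)$ is dominated by $\tfrac{1}{\sigma\sqrt{2\pi}}\exp\bigl(-\tfrac{(|x-Q_0|-\delta)_+^2}{2\sigma^2}\bigr)$, which is integrable.
\item $C$ itself is finite, since the squared CDF difference of two Gaussians decays like a Gaussian tail at both ends.
\end{itemize}
The mean value theorem combined with this integrable dominating function then legitimizes the interchange and completes the argument.
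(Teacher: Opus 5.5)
Your route is the paper's route: write $F_{Q,\sigma}(x)=\Phi\bigl((x-Q)/\sigma\bigr)$, differentiate in $Q$ with the chain rule, and pass the derivative under the integral. Your dominated-convergence argument supplies the justification for the interchange that the paper skips, and it is sound.

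Your normalization objection is correct, and you should present it as a correction rather than offering the standardized density as an alternative ``reading'' of $B$. The definition of $B$ explicitly uses the paper's $\varphi_{Q,\sigma}$, which carries the factor $\frac{1}{\sigma\sqrt{2\pi}}$. With that definition $\partial_Q F_{Q,\sigma}=-\varphi_{Q,\sigma}$, and hence $\partial_Q C=-2B$.

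The paper's proof reaches $-\frac{2}{\sigma}B$ only through the step $-\frac{1}{\sigma}f\bigl(\frac{x-Q}{\sigma}\bigr)=-\frac{\varphi_{Q,\sigma}}{\sigma}$, where $f$ is the standard normal density. That step is off by exactly the factor $\sigma$ you spotted, since the left-hand side equals $-\varphi_{Q,\sigma}$. So the lemma as written is false whenever $\sigma\neq 1$ and $B\neq 0$.

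You can make this concrete. Take $X\sim\varphi_{Q,\sigma}$ independent of $Y\sim\varphi_{Q',\sigma'}$. Then $B=\E[F_{Q,\sigma}(X)]-\E[F_{Q',\sigma'}(X)]=\frac12-\Phi\bigl((Q-Q')/\sqrt{\sigma^2+\sigma'^2}\bigr)$, so $\partial_Q C=2\Phi\bigl((Q-Q')/\sqrt{\sigma^2+\sigma'^2}\bigr)-1$. This is at most $1$ in absolute value. The stated $-\frac{2}{\sigma}B$, by contrast, approaches $\pm\frac{1}{\sigma}$ once $|Q-Q'|$ is large relative to $\sigma$ and $\sigma'$. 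What you can actually prove is $\partial_Q C=-2B$, or equivalently $-\frac{2}{\sigma}\tilde B$ with $\tilde B:=\sigma B$ built from the standardized density.

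Your closing claim that the qualitative message of Section~\ref{sec:stability} is unchanged ``either way'' does not follow. The proof of Lemma~\ref{lem:2} needs both the $\frac{1}{\sigma}$ prefactor and $|B|\le 1$ (via $\|\varphi_{Q,\sigma}\|_1=1$), and no single normalization gives both. With the paper's $B$, the correct identity yields only $|\partial_Q C|\le 2$. With your $\tilde B$, one only has $|\tilde B|\le\sigma$, so again nothing decays.

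The limit is still true, but for a different reason: $B$ itself vanishes. For any target $Y$ independent of $X\sim\varphi_{Q,\sigma}$, $B=\frac12-\E_Y\bigl[\Phi\bigl((Q-Y)/\sigma\bigr)\bigr]\to 0$ as $\sigma\to\infty$ by dominated convergence, at rate $O(1/\sigma)$ when $Y$ has a finite mean. That is the argument you would need to keep the remark.
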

\begin{proof}
Let $f(x)=\varphi_{(0,1)}$ be the standard normal density function and $\Phi=F_{0,1}$ the standard normal cumulative distribution function. Applying the chain rule yields
$$
\frac{\partial}{\partial  Q}F_{Q,\sigma}
=\frac{\partial}{\partial  Q} \Phi\left(\frac{x-Q}{\sigma}\right)
=-\frac{1}{\sigma}f\left(\frac{x-Q}{\sigma}\right)=-\frac{\varphi_{Q,\sigma}}{\sigma}.
$$
Hence,
\begin{eqnarray}
\frac{\partial}{\partial  Q} C(Q,\sigma) &=& 
%\frac{1}{2C(Q,\sigma)}   
\int_{- \infty}^{\infty}
{2 (F_{Q, \sigma}(x) - F_{Q',\sigma'}(x)) \frac{\partial}{\partial  Q} F_{Q, \sigma}(x)  dx}\nonumber \\
&=& 
%\frac{1}{2C(Q,\sigma)}   
\int_{- \infty}^{\infty}
{-2 (F_{Q, \sigma}(x) - F_{Q',\sigma'}(x)) \frac{\varphi_{Q,\sigma}(x)}{\sigma} dx}\nonumber\\
 & =& 
 %\frac{-1}{\sigma C(Q,\sigma)}  
 -\frac{2}{\sigma }  
 \int_{- \infty}^{\infty}
{ (F_{Q, \sigma}(x) -F_{Q',\sigma'}(x)) \varphi_{Q,\sigma}(x) dx}.\nonumber
\end{eqnarray}
\end{proof}

\bigskip

\begin{figure}
    \hspace{-2cm}
  \includegraphics[scale=0.25]{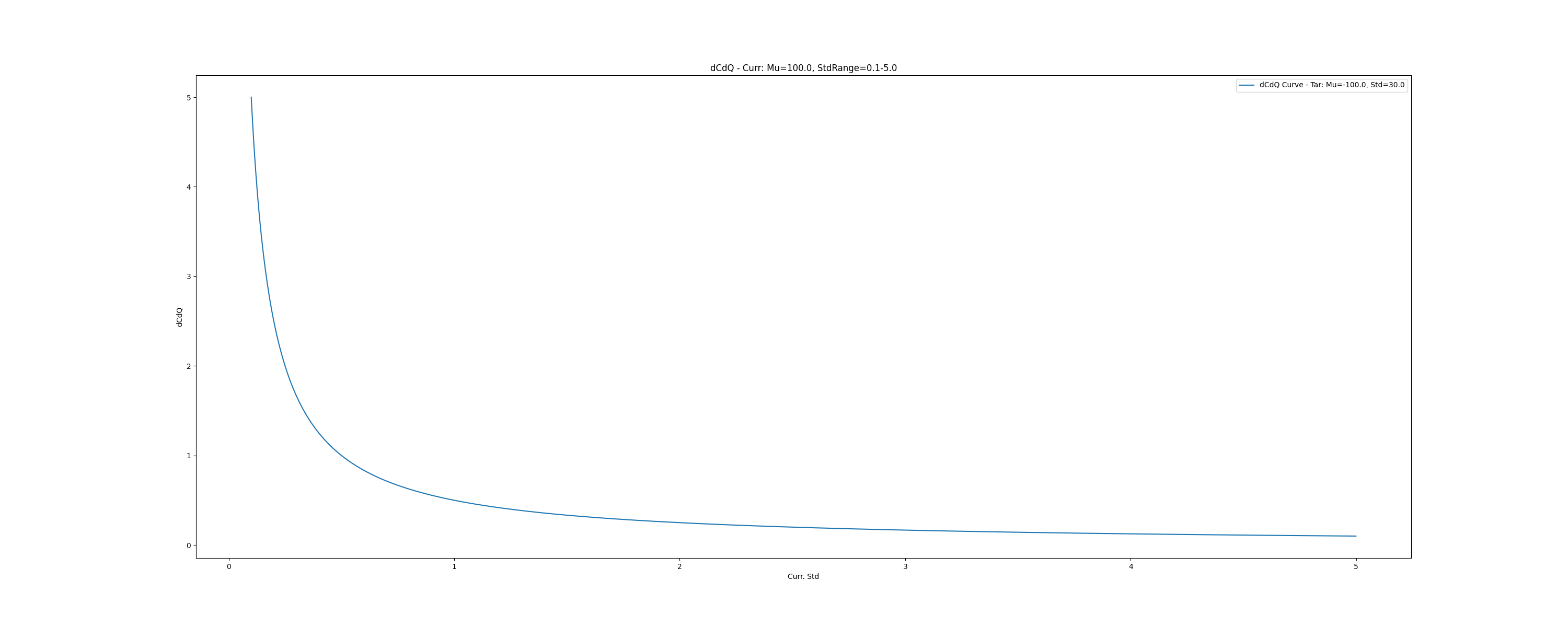}
    \caption{Value of $\frac{\partial}{\partial  Q} C(Q,\sigma)$ for varying $\sigma\in [\lb\sigma,\ub\sigma]$}
    \label{fig:C_Q}
\end{figure}
The following Lemma and Proposition establish the core result of the analysis. 
\begin{lemma}
\label{lem:2}
Let   $\varphi_{Q,\sigma}$ and $\varphi_{Q',\sigma'}$ be
a current 
and a target distribution, respectively. Then
$$
\lim_{\sigma\to\infty }\frac{\partial}{\partial  Q} C(Q,\sigma)=0.
$$
\end{lemma}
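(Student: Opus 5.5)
By Lemma \ref{lem:1}, it suffices to show that $B(Q,\sigma)$ stays bounded as $\sigma\to\infty$; the prefactor $2/\sigma$ then drives the derivative to zero. I will bound $B$ directly and then take the limit, keeping the target parameters $Q',\sigma'$ and the current mean $Q$ fixed throughout.

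\textbf{Step 1: a uniform bound on $B$.} Since $F_{Q,\sigma}$ and $F_{Q',\sigma'}$ are cumulative distribution functions, they take values in $[0,1]$. Hence
$$\bigl|F_{Q,\sigma}(x)-F_{Q',\sigma'}(x)\bigr|\le 1 \quad \text{for all } x.$$
Because $\varphi_{Q,\sigma}$ is a probability density, this gives
$$|B(Q,\sigma)|\le \int_{-\infty}^{\infty}\varphi_{Q,\sigma}(x)\,dx = 1.$$
The bound holds for every $\sigma>0$.

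\textbf{Step 2: conclude via Lemma \ref{lem:1}.} Combining Step 1 with Lemma \ref{lem:1},
$$\left|\frac{\partial}{\partial Q}C(Q,\sigma)\right| = \frac{2}{\sigma}\,|B(Q,\sigma)| \le \frac{2}{\sigma}\longrightarrow 0 \quad (\sigma\to\infty),$$
which is the claim. The bound is uniform in $Q$, so the convergence is in fact uniform over $Q\in[\lb Q,\ub Q]$.

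The bound is crude but sharp enough for the statement. One could refine it by substituting $x=Q+\sigma u$, which gives
$$B=\int_{-\infty}^{\infty}\bigl(\Phi(u)-\Phi\bigl((Q-Q'+\sigma u)/\sigma'\bigr)\bigr)f(u)\,du.$$
As $\sigma\to\infty$ the integrand tends pointwise to $(\Phi(u)-\mathbf 1\{u>0\})f(u)$, and dominated convergence yields $B\to 0$. This refinement only strengthens the rate and is not needed.

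I do not expect a real obstacle. The one point requiring care is the differentiation under the integral sign in Lemma \ref{lem:1}, which I take as given. It is justified for each fixed $\sigma>0$ because $(F_{Q,\sigma}-F_{Q',\sigma'})^2$ is integrable: both distributions have finite first moments.
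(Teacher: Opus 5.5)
Your proof is correct and is essentially the paper's argument. The paper likewise bounds $|B(Q,\sigma)|\le 1$, phrased via H\"older's inequality as $\|(F_{Q,\sigma}-F_{Q',\sigma'})\varphi_{Q,\sigma}\|_1\le\|F_{Q,\sigma}-F_{Q',\sigma'}\|_\infty\|\varphi_{Q,\sigma}\|_1\le 1$, and then lets the prefactor $2/\sigma$ from Lemma~\ref{lem:1} drive the derivative to zero.

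Your dominated-convergence refinement is also right, since the limit integral is $\int\Phi f\,du-\int_0^\infty f\,du=\tfrac12-\tfrac12=0$, which gives the sharper rate $o(1/\sigma)$. The only loose point is your side remark on Lemma~\ref{lem:1}: differentiating under the integral needs an integrable envelope for the $Q$-derivative, such as $\tfrac{2}{\sigma}\sup_{|\tilde Q-Q|\le 1}\varphi_{\tilde Q,\sigma}(x)$, not just integrability of $(F_{Q,\sigma}-F_{Q',\sigma'})^2$.
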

\begin{proof}
Since $F_{Q, \sigma}(x)\in [0,1]$ and $\|\varphi_{Q, \sigma}\|_{1}=1$, it holds from the H\"older inequality
$$
|B(Q,\sigma)|\leq 
\|(F_{Q, \sigma} -F_{Q',\sigma'}) \cdot \varphi_{Q,\sigma}\|_{1}
\leq \|F_{Q, \sigma} -F_{Q',\sigma'}\|_{\infty} \cdot  \|\varphi_{Q,\sigma}\|_{1} \leq 1.
$$
This gives
$$
\lim_{\sigma\to\infty } 
\left|\frac{\partial}{\partial  Q} C(Q,\sigma)\right| = 
\lim_{\sigma\to\infty } 
\frac{2}{\sigma } \cdot 
|B(Q,\sigma)|
\leq 
\lim_{\sigma\to\infty } 
\frac{2}{\sigma 
} =0.
$$
\end{proof}

\begin{proposition}
Denote  by $\Psi_\theta(s_t, a_t):=\frac{\partial}{\partial Q_{\theta_1}(s_t, a_t)} d_e(\hat{\T}^{\pi}_H Z(s_t, a_t), Z_{\theta}(s_t, a_t))$ a gradient weight of Eq. \eqref{cdsac_policyevaluation_objective} such that\\
$\hat{\nabla}_{\theta_1} J_Z(\theta)=\frac{1}{|\hat{\B}|} \sum_{\hat{\B}}
[\Psi_\theta(s_t, a_t)\nabla_{\theta_1} Q_{\theta_1}(s_t, a_t)]$.
Then
\begin{equation}
\label{eq:grad_J_Z}
\lim_{\sigma_{\theta_2}(s_t, a_t)\to \infty}\Psi_\theta(s_t, a_t)=0.
%+\Phi(s, a)\nabla_{\theta_2} \sigma_{\theta_2}(s, a
\end{equation}
\end{proposition}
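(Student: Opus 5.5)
The plan is to reduce the Proposition to Lemma \ref{lem:2}, applied pointwise at a fixed transition $(s_t,a_t,r_t,s_{t+1})$ of the mini-batch.

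The first step is to identify the target. With $s_{t+1}$ fixed, the empirical target $\hat{\T}^{\pi}_H Z(s_t,a_t)$ is obtained from the frozen Gaussian model $Z_{\bar\theta}$. Under the fixed-moment assumption it is a Gaussian $\varphi_{Q',\sigma'}$ with
\[
Q' = r(s_t,a_t)+\gamma\bigl(Q_{\bar\theta_1}(s_{t+1},a_{t+1})-\alpha\log\pi(a_{t+1}\mid s_{t+1})\bigr), \qquad \sigma'=\gamma\,\sigma_{\bar\theta_2}(s_{t+1},a_{t+1}),
\]
for the sampled $a_{t+1}$. In the terminal case $\sigma'=0$, and the target is a Dirac mass. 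The essential point is that $Q'$ and $\sigma'$ depend only on the frozen parameters $\bar\theta$, not on $\theta$. Hence the map $(Q,\sigma)\mapsto d_e(\hat{\T}^{\pi}_H Z(s_t,a_t),\varphi_{Q,\sigma})$ is exactly $C(Q,\sigma)$ from the preceding subsection for this fixed target. By the chain rule, $\Psi_\theta(s_t,a_t)=\partial_Q C(Q,\sigma)$ evaluated at $Q=Q_{\theta_1}(s_t,a_t)$ and $\sigma=\sigma_{\theta_2}(s_t,a_t)$. This also confirms the stated factorization $\hat\nabla_{\theta_1}J_Z=\frac1{|\hat\B|}\sum\Psi_\theta\nabla_{\theta_1}Q_{\theta_1}$, since $\sigma_{\theta_2}$ does not depend on $\theta_1$.

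The second step is to invoke Lemma \ref{lem:1} and the bound from Lemma \ref{lem:2}. Lemma \ref{lem:1} gives $\Psi_\theta=-\frac{2}{\sigma}B(Q,\sigma)$, and the proof of Lemma \ref{lem:2} gives $|B|\le1$. Together these yield the bound $|\Psi_\theta(s_t,a_t)|\le 2/\sigma_{\theta_2}(s_t,a_t)$. This bound is uniform in $Q_{\theta_1}(s_t,a_t)$ and in the target parameters, so letting $\sigma_{\theta_2}(s_t,a_t)\to\infty$ gives \eqref{eq:grad_J_Z}. Uniformity matters here because the limit is taken along the network output, possibly with $Q_{\theta_1}$ varying simultaneously.

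The main obstacle is justifying Lemma \ref{lem:1} when the target is degenerate ($\sigma'=0$, where $F_{Q',\sigma'}$ is a step function), and more generally justifying differentiation under the integral sign. I would handle this with dominated convergence. First, $F_{Q,\sigma}-F_{Q',\sigma'}$ decays in both tails; for the Dirac target, $F_{Q'}$ is the indicator of $[Q',\infty)$, which is still bounded. Second, for $Q$ in a compact interval, $\partial_Q F_{Q,\sigma}=-\varphi_{Q,\sigma}/\sigma$ is dominated by an integrable Gaussian envelope. Third, $C$ is finite because both distributions have finite first moments. With these three facts the interchange of derivative and integral is valid, so Lemma \ref{lem:1} and the bound $|B|\le 1$ hold verbatim for every admissible target, and the limit follows.
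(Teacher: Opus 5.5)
Your proposal is correct and follows the paper's own argument. Lemma~\ref{lem:1} gives $\Psi_\theta(s_t,a_t)=-2B(Q_{\theta_1}(s_t,a_t),\sigma_{\theta_2}(s_t,a_t))/\sigma_{\theta_2}(s_t,a_t)$, and the bound $|B|\le 1$ from Lemma~\ref{lem:2} forces this to vanish as $\sigma_{\theta_2}(s_t,a_t)\to\infty$; your extra points (the target depends only on the frozen $\bar\theta$, the $2/\sigma$ bound is uniform in $Q$ and in the target, and dominated convergence justifies differentiating under the integral even for a Dirac target) are sound refinements that the paper leaves implicit.
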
   
\begin{proof}
From Lemma \ref{lem:1} it follows
$$
\Psi_\theta(s_t, a_t)=\frac{\partial}{\partial Q_{\theta_1}(s_t, a_t)} d_e(\hat{\T}_H^\pi Z(s_t, a_t), Z_{\theta}(s_t, a_t)) 
=-
\frac{2B(Q_{\theta_1}(s_t, a_t),\sigma_{\theta_2}(s_t, a_t))}{\sigma_{\theta_2}(s_t, a_t)}.
$$
The statement follows from Lemma
\ref{lem:2}.
\end{proof}
Figure \ref{fig:C_Q} shows 
that the gradient weight $\Psi_\theta(s_t, a_t)$ is quickly reducing, if the variance $\sigma_{\theta_2}(s_t,a_t)$ is increased.
The analysis reveals that for a particular pair $(s_t,a_t)$, the magnitude of the $Q_{\theta_1}$ update is reversely scaled by the variance $\sigma_{\theta_2}$, such that the update vanishes as $\sigma_{\theta_2} \rightarrow \infty$.   
Hence,
$Q_{\theta_1}(s_t,a_t)$ is updated more aggressively at state-action pairs $(s_t,a_t)$ where $\sigma_{\theta_2}(s_t,a_t)$ is small, reflecting temporal stability of the target.  This update type is therefore called {\it confidence-driven}. 
%#######
\subsection{Effect on Overestimation Bias}
\label{sec:overestimation}
A pervasive challenge in $Q$-based reinforcement learning is the overestimation bias that arises from function approximation in noisy environments \citep{Thrun1993IssuesIU}.
In this section, it is shown that the influence of the overestimation error at $(s_t,a_t)$ on the gradient of the energy loss function \eqref{cdsac_policyevaluation_objective} is small, if the variance $\sigma_{\theta_2} (s_t,a_t)$ of $Z_\theta(s_t,a_t)$ is large.
Denote by $\tilde Q$ and $Q^*$ a noisy  and an exact mean (Q-value), respectively. 
Consider a noisy target 
$$\tilde Y(s_t, a_t):= r(s_t, a_t) + \gamma Z(s_{t+1},a_{t+1}), \quad a_{t+1}=\argmax_{a\in {\cal A}}\tilde Q(s_{t+1}, a) $$ 
and an exact target
$$Y^*(s_t, a_t):= r(s_t, a_t) + \gamma Z(s_{t+1},a_{t+1}), \quad a_{t+1}=\argmax_{a\in {\cal A}} Q^*(s_{t+1}, a). $$

\begin{proposition}
Let
$\Delta \Psi_\theta(s_t, a_t):=|\tilde \Psi(s_t,a_t)-\Psi^*(s_t,a_t)|$ be the gradient weight error of the energy distance \eqref{cdsac_policyevaluation_objective} regarding  noisy and exact gradient weights, where 
$$
\tilde \Psi(s_t,a_t):=\frac{\partial}{\partial Q_{\theta_1}} d_c(Z_{\theta}(s_t,a_t), \tilde Y(s_t,a_t)),\quad
\Psi^*(s_t,a_t):=\frac{\partial}{\partial Q_{\theta_1}} d_c(Z_{\theta}(s_t,a_t), Y^* (s_t,a_t)).
$$ 
 Then 
\begin{equation}
    \label{eq:CramerOverestimation}
    \lim_{\sigma_{\theta_2}(s_t, a_t)\to \infty}\Delta \Psi_\theta(s_t, a_t)=0.
\end{equation}
\end{proposition}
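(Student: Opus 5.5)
The plan is to reduce the claim to Lemma \ref{lem:2} applied twice, once per target, and then use the triangle inequality. Although the statement writes $d_c$, I will read $\tilde\Psi$ and $\Psi^*$, as in the preceding Proposition, as partial derivatives of the energy distance $d_e$ with respect to the mean $Q_{\theta_1}(s_t,a_t)$ of $Z_\theta(s_t,a_t)$. For $d_c=\sqrt{d_e}$ one would instead use $\partial_Q d_c = \partial_Q d_e/(2 d_c)$, which needs a separate lower bound on $d_c$; I regard the $d_e$ reading as the intended one, consistent with the objective \eqref{cdsac_policyevaluation_objective}.

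Both targets $\tilde Y(s_t,a_t)$ and $Y^*(s_t,a_t)$ are fixed distributions that do not depend on $\theta_1$ or $\theta_2$. They are Gaussians with parameters $(\tilde Q',\tilde\sigma')$ and $(Q^{*\prime},\sigma^{*\prime})$ obtained from $r+\gamma Z(s_{t+1},a_{t+1})$ at the two respective choices of $a_{t+1}$. For each target I will repeat the computation of Lemma \ref{lem:1} with $C$ defined relative to that target:
\[
\tilde\Psi = -\frac{2}{\sigma}\tilde B, \qquad \Psi^* = -\frac{2}{\sigma}B^*,
\]
where $\sigma=\sigma_{\theta_2}(s_t,a_t)$. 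Here $\tilde B$ and $B^*$ are given by \eqref{eq:Bdef} with $F_{Q',\sigma'}$ replaced by the CDF of $\tilde Y$ or $Y^*$, respectively. The derivative of $F_{Q,\sigma}$ with respect to $Q$ is unchanged, so Lemma \ref{lem:1} carries over verbatim. This holds even if the target is a general distribution rather than a Gaussian, since only $F_{Q,\sigma}$ is differentiated.

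Next, the Hölder bound in the proof of Lemma \ref{lem:2} uses only $|F_{Q,\sigma}-F_{\text{target}}|\le 1$ and $\|\varphi_{Q,\sigma}\|_1=1$, so it gives $|\tilde B|\le 1$ and $|B^*|\le 1$ uniformly in $Q$, $\sigma$ and in both targets. The triangle inequality then yields
\[
\Delta\Psi_\theta(s_t,a_t)\le |\tilde\Psi|+|\Psi^*|\le \frac{4}{\sigma_{\theta_2}(s_t,a_t)} \to 0 .
\]
A slightly sharper version writes $\tilde B-B^*=\int (F_{Y^*}-F_{\tilde Y})\varphi_{Q,\sigma}\,dx$, which gives $\Delta\Psi\le \tfrac{2}{\sigma}\|F_{Y^*}-F_{\tilde Y}\|_\infty$. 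This shows explicitly how the overestimation error enters, but it is not needed for the limit.

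The only delicate point is the interpretive one above: making sure the targets are independent of $\sigma_{\theta_2}$ (they come from the frozen target network $\bar\theta$) and that differentiation is of $d_e$. I would state these two conventions explicitly at the start of the proof. Beyond that, the argument is a direct corollary of Lemmas \ref{lem:1} and \ref{lem:2}.
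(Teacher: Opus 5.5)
Your proposal is correct and follows the paper's proof. Both apply Lemma \ref{lem:1} to each target to get $\tilde\Psi-\Psi^* = -\tfrac{2}{\sigma_{\theta_2}}(\tilde B - B^*)$, use the uniform bound on $B$ from Lemma \ref{lem:2}, and tacitly differentiate $d_e$ rather than $d_c$, as you do. The one difference is that the paper states the bound $2/\sigma_{\theta_2}$ while citing only $|\tilde B|,|B^*|\le 1$, which by itself gives your $4/\sigma_{\theta_2}$; your difference form $\tilde B - B^* = \int (F_{Y^*}-F_{\tilde Y})\varphi_{Q,\sigma}\,dx$ is exactly what justifies the paper's sharper constant.
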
   
\begin{proof}
Define $B^*$ and $\tilde{B}$
via \eqref{eq:Bdef} using $Y^*$ and $\tilde{Y}$, respectively.
. From Lemma \ref{lem:1} it follows
$$
\left|\tilde \Psi(s_t,a_t)-\Psi^*(s_t,a_t)\right| = 2\left|
     \frac{\tilde B_\theta(s_t,a_t)}{\sigma_{\theta_2}(s_t,a_t)} 
     %\frac{
     %\tilde B_\theta(s_t,a_t)
     %}{\tilde C_\theta(s_t,a_t)}
    -  \frac{B^*_\theta(s_t,a_t)}{\sigma_{\theta_2}(s_t,a_t) } 
    %\frac{B^*_\theta(s_t,a_t)}{C^*_\theta(s_t,a_t)}
\right|
\leq \frac{2}{\sigma_{\theta_2}(s_t,a_t) },
$$
since $|B^*_\theta(s_t,a_t)|\leq 1$
and $|\tilde B_\theta(s_t,a_t)|\leq 1$.
This proves \eqref{eq:CramerOverestimation}.
\end{proof}
Figure \ref{fig:over_est} shows that the gradient weight error $\Delta \Psi_\theta(s_t, a_t)$ is quickly reducing  
if the variance $\sigma_{\theta_2}(s_t,a_t)$ is increased. 
The variance is elevated at state-action pairs exhibiting high return stochasticity, coinciding with the regions most susceptible to value overestimation \cite{Lan2020Maxmin}.

\begin{figure}[h]
 \hspace{-2cm}
    %\centering
  \includegraphics[scale=0.25]{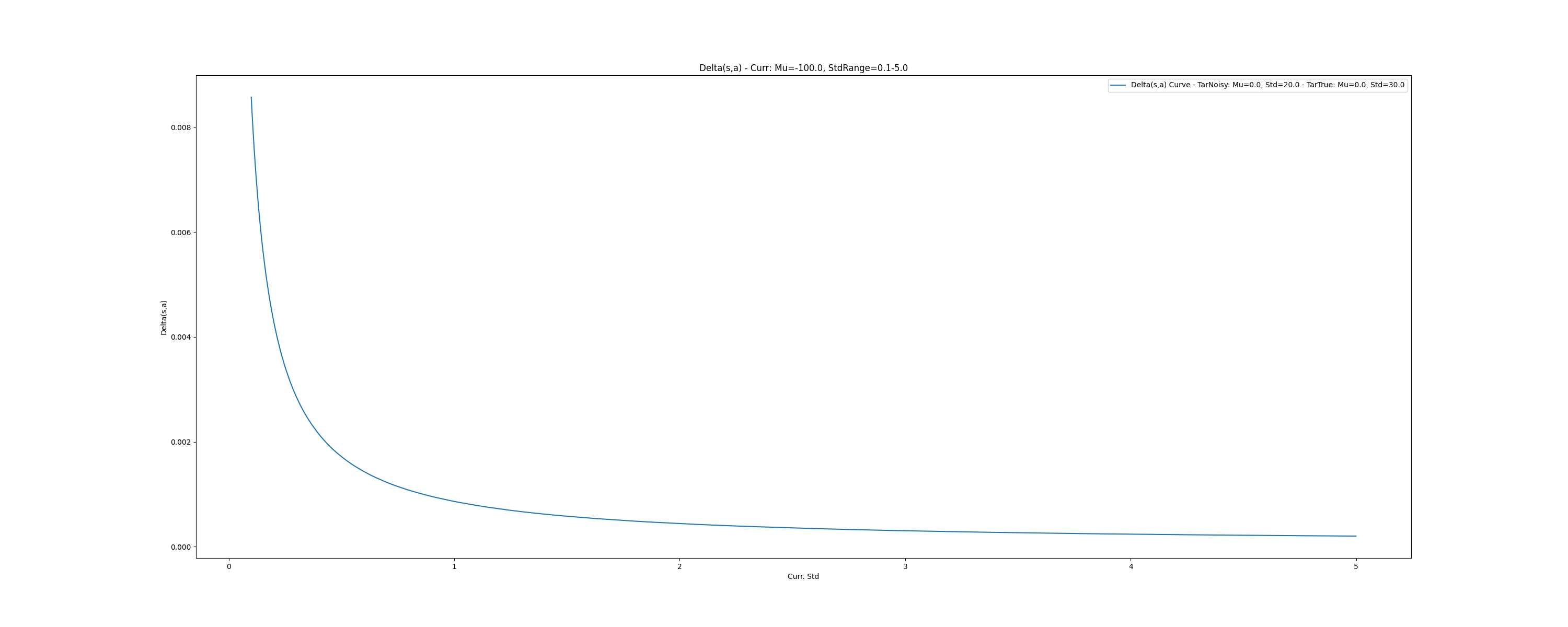}
    \caption{Value of $\Delta \Psi$ for varying $\sigma\in [\lb\sigma,\ub\sigma]$}
    \label{fig:over_est}
\end{figure}
%
%\att{Overestimation Experiment}
%---------------------------------------------------------------------------------

%\subsection{Algorithm}
%\label{sec:Algorithm}

%----------------------------------------------------------------

\section{Numerical Experiments}
\label{sec:num}
The experiments are conducted with the same hyperparameter values as in \cite{Haarnoja2018SoftAO}, except for C-DSAC-specific ones (e.g, limits on standard deviation, ... etc. ), see Table \ref{tab:cdsac_hyper}. Implementation-level nuances and low-level optimizations exert a substantial influence on an algorithm's empirical performance \citep{Engstrom2020ImplementationMatters}. It is assumed that the original implementation used for the baseline results in \cite{Haarnoja2018SoftAO} did not employ additional optimization techniques such as value function clipping, orthogonal initialization, layer scaling, learning rate annealing, reward/observation clipping, or observation and reward normalization. 
To be consistent with these assumptions, these code-level optimization techniques is not employed in the C-DSAC implementation. 
Using the same network architecture as the SAC experiments would grant C-DSAC a disproportionate advantage, owing to its dual-output critic. 
Therefore, the last hidden layer  is reduced by one neuron. Consequently, the C-DSAC critic has slightly fewer parameters than those reported in the original SAC configuration, see Appendix \ref{sec:hyperparameters}. 
Although \cite{Ma2020DistributionalSA} and \cite{Duan2020DistributionalSA} claim superior performance over SAC baselines, it is important to note that the former uses more complex neural network architectures with batch normalization for the value approximator.

\subsection{Comparative Evaluation}
The goal of the experiments is to evaluate the impact of the energy distance on the entropy-augmented actor-critic framework across diverse and complex environments \cite{brockman2016openai}, see Figure \ref{fig:gym_envs}: Hopper-v4, $(\s \times \A) \in \mathbb{R}^{11} \times \mathbb{R}^3$; Ant-v4 $(\s \times \A) \in \mathbb{R}^{111} \times \mathbb{R}^8$; Humanoid-v4 $(\s \times \A) \in \mathbb{R}^{376} \times \mathbb{R}^{17}$; HalfCheetah-v4 $(\s \times \A) \in \mathbb{R}^{17} \times \mathbb{R}^6$; Walker2d-v4 $(\s \times \mathcal{A}) \in \mathbb{R}^{17} \times \mathbb{R}^6$. 
The entropy coefficient is constant for each environment, following the configuration in \cite{Haarnoja2018SoftAO}. Environment-specific values for $\alpha$ are given in Table \ref{tab:env_entropy}.
Following the recommendations of \cite{Henderson2017DeepRL}, evaluation rollouts are performed every $10^3$ iterations . During evaluation, agents follow a deterministic policy derived from the distribution mean.
The agent is trained for one million iterations in each environment. The results are averaged over five random seeds. 
The graphs in Figure \ref{fig:eval_CDSAC} showcase (among others) the evaluation performances of C-DSAC and SAC in the benchmark environments.  
%\newline
%
Especially in the most difficult environment, Humanoid-v4, C-DSAC clearly outperforms SAC and reaches an average reward of about 5715, a value that is achieved by SAC only after approximately four million iterations. 
Notably, C-DSAC performance has not yet reached a plateau by the conclusion of training, suggesting further potential for improvement with extended computation.
Although C-DSAC and SAC reach the same average rollout in the Hopper-v4 environment, C-DSAC does so considerably faster. 
The instability after $4 \times 10^5 $ iterations can be due to the occurrence of catastrophic forgetting in one of the runs. However, it manages to recover. 
In the Walker2d-v4 environment, C-DSAC attains higher mean rewards, but the training profile reveals greater stochasticity relative to the baseline.
The least stable training occurs in the Ant-v4 environment, where high stochasticity can be observed. Still, C-DSAC outperforms SAC in Ant-v4.
In HalfCheetah-v4, C-DSAC obtains an average reward of less than $10,000$ in the end of training and is therefore outperformed by SAC in this environment. 

To provide more details about the performance, Table \ref{tab:best_performance} presents the best model performances across all environments, reporting average episodic rewards over 100 runs.

\begin{figure}[ht]
    \centering
    \begin{subfigure}{0.22\textwidth}
        \centering
        \includegraphics[scale=0.5]{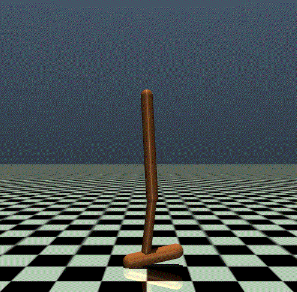}
        \label{fig:Hopper-v4}
        \caption{Hopper-v4}
    \end{subfigure}
    \begin{subfigure}{0.22 \textwidth}
        \centering
        \includegraphics[scale=0.49]{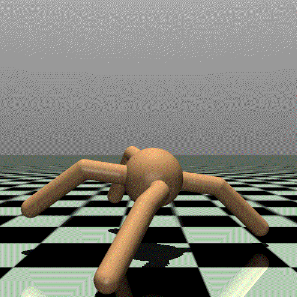}
        \label{fig:Ant-v4}
        \caption{Ant-v4}
    \end{subfigure}
    \begin{subfigure}{0.22\textwidth}
        \centering
        \includegraphics[scale=0.5]{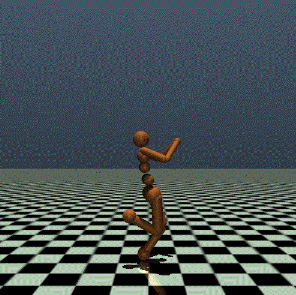}
        \label{fig:Humanoid-v4}
        \caption{Humanoid-v4}
    \end{subfigure}
    \begin{subfigure}{0.22\textwidth}
        \centering
        \includegraphics[scale=0.5]{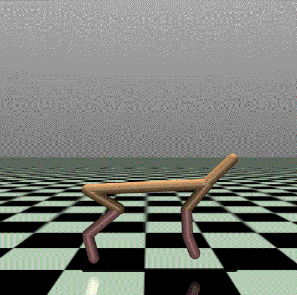}
        \label{fig:HalfCheetah-v4}
        \caption{HalfCheetah-v4}
    \end{subfigure}
    \begin{subfigure}{0.17\textwidth}
        \centering
        \includegraphics[scale=0.5]{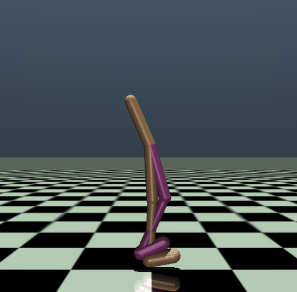}
        \label{fig:Walker2d-v4}
        \caption{Walker2d-v4}
    \end{subfigure} 
    \caption{Testing environments}
    \label{fig:gym_envs}
\end{figure}

% comparison CDSAC - SAC
\begin{figure}[htbp]
\vspace{-1cm}
    \centering

    % First row
    \begin{subfigure}[b]{\textwidth}
        \centering
        % \begin{minipage}{0.45\textwidth}
         \begin{minipage}{0.50\textwidth}
            \centering
            %\vspace{-0.5cm}
            \includegraphics[width=\textwidth]{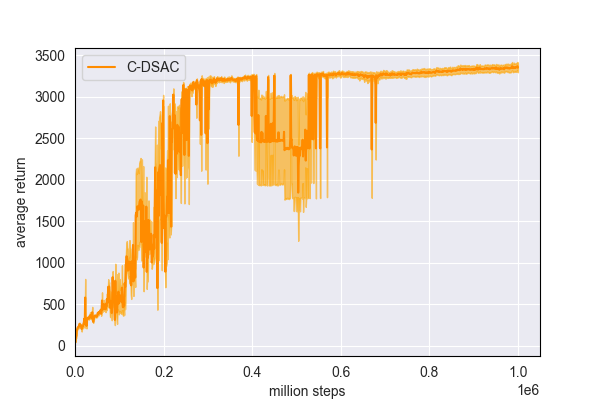}
        \end{minipage}
        \hfill
        \begin{minipage}{0.45\textwidth}
            \centering
            \vspace{0.5cm}
            \includegraphics[width=\textwidth]{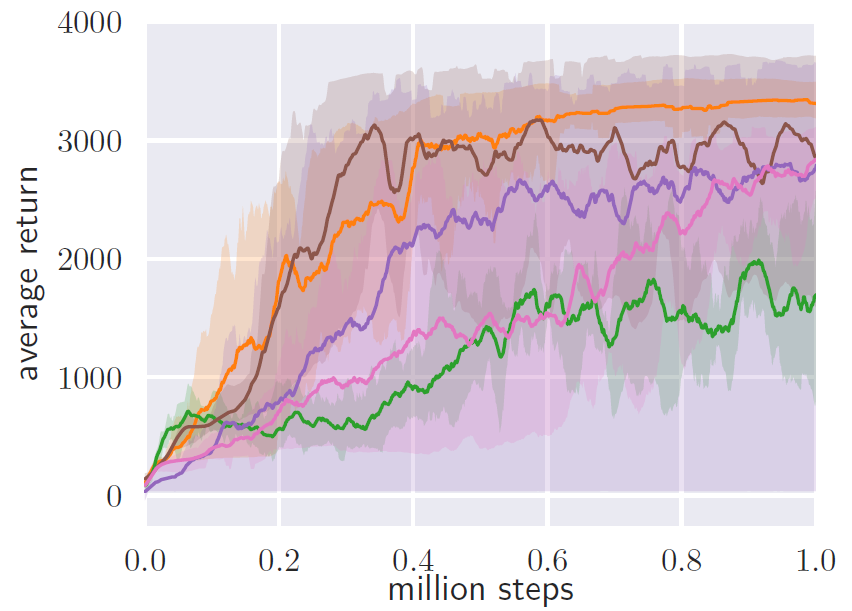}
        \end{minipage}
        \caption{Hopper}
    \end{subfigure}

    % Second row
    \begin{subfigure}[b]{\textwidth}
        \centering
        \begin{minipage}{0.52\textwidth}
            \centering
            \includegraphics[width=\textwidth]{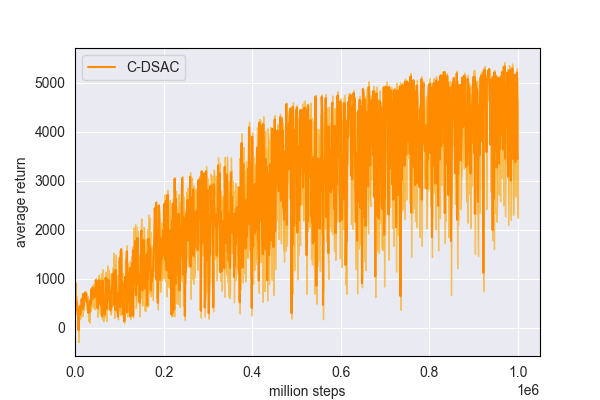}
        \end{minipage}
        \hfill
        \begin{minipage}{0.45\textwidth}
            \centering
            \vspace{0.5cm}
            \includegraphics[width=\textwidth]{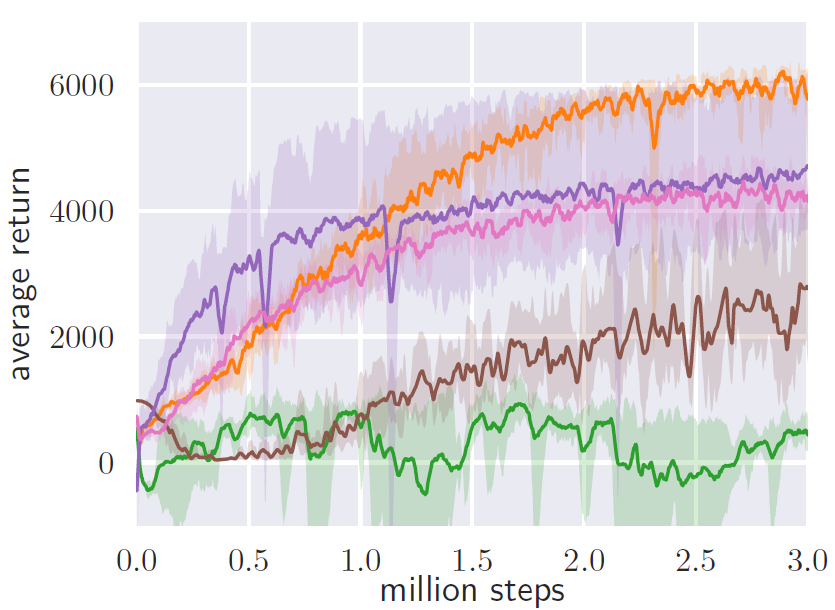}
        \end{minipage}
        \caption{Ant (left one million, right three million steps)}
    \end{subfigure}

    % Third row
    \begin{subfigure}[b]{\textwidth}
        \centering
        \begin{minipage}{0.52\textwidth}
            \centering
            \includegraphics[width=\textwidth]{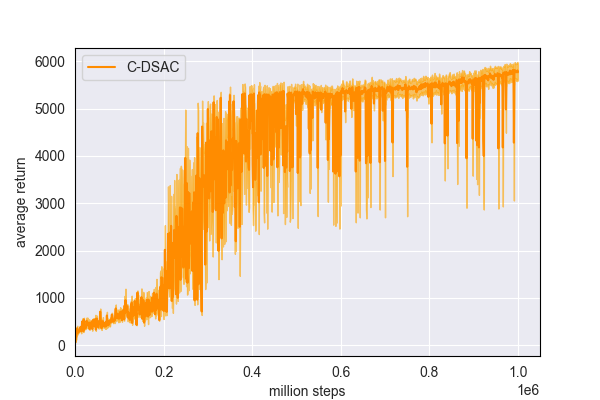}
        \end{minipage}
        \hfill
        \begin{minipage}{0.45\textwidth}
            \centering
            \vspace{0.5cm}
            \includegraphics[width=\textwidth]{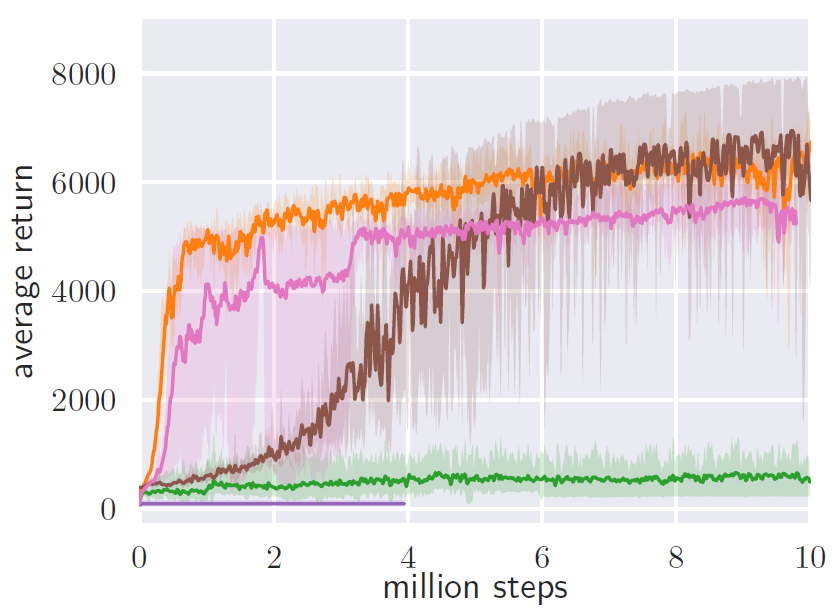}
        \end{minipage}
        \caption{Humanoid (left one million, right ten million steps)}
    \end{subfigure}

    % Fourth row
    \begin{subfigure}[b]{\textwidth}
        \centering
        \begin{minipage}{0.52\textwidth}
            \centering
            \includegraphics[width=\textwidth]{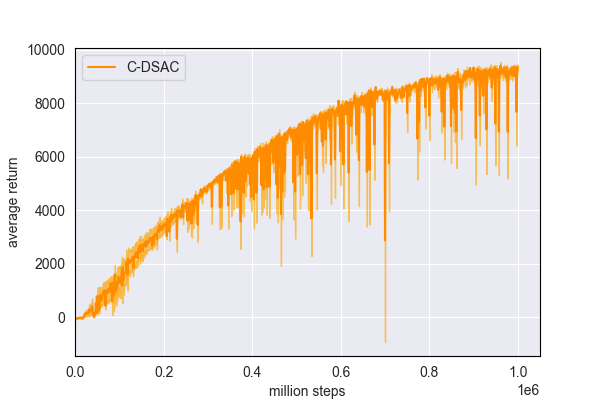}
        \end{minipage}
        \hfill
        \begin{minipage}{0.45\textwidth}
            \centering
            \vspace{0.6cm}
            \includegraphics[width=\textwidth]{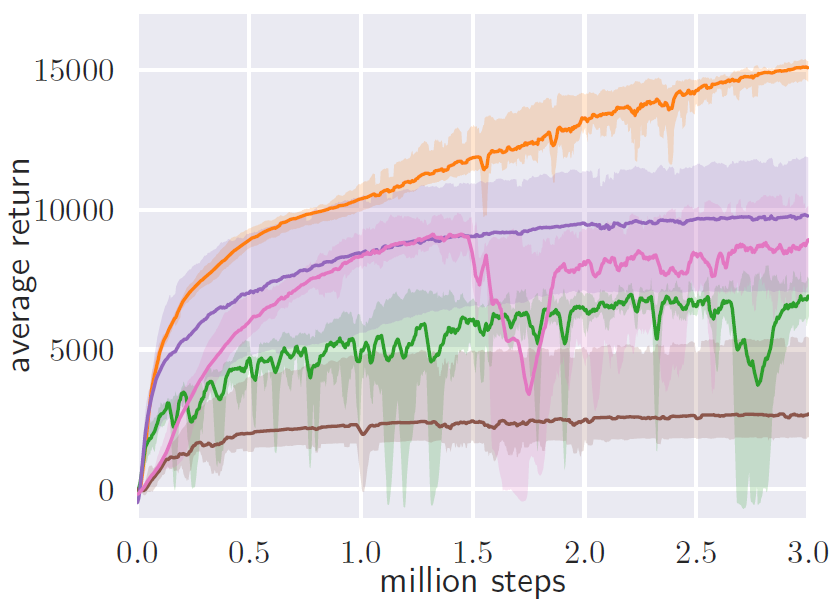}
        \end{minipage}
        \caption{HalfCheetah (left one million, right three million steps)}
    \end{subfigure}
    %\caption{Main caption for the entire figure}
\end{figure}

% Continue Comparison Figure
\begin{figure}[htbp]
    \ContinuedFloat
    \centering
    % Fifth row
    \begin{subfigure}[b]{\textwidth}
        \centering
        \begin{minipage}{0.52\textwidth}
            \centering
            \includegraphics[width=\textwidth]{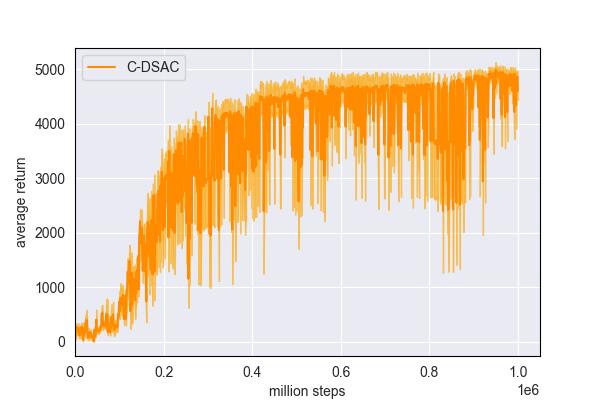}
        \end{minipage}
        \hfill
        \begin{minipage}{0.45\textwidth}
            \centering
            \vspace{0.6cm}
            \includegraphics[width=\textwidth]{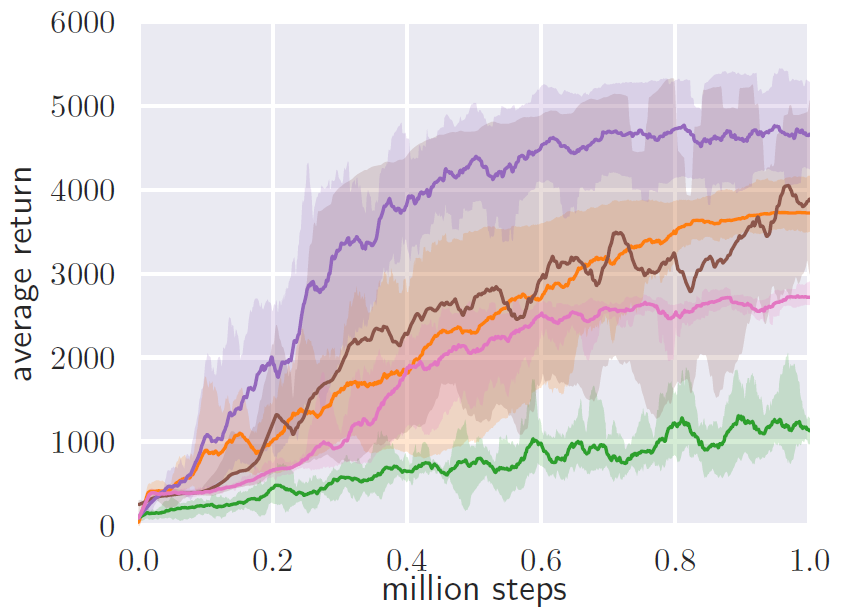}
        \end{minipage}
        \caption{Walker}
    \end{subfigure}
    
    \caption{Orange curves represent C-DSAC (left) and SAC (right, from \citep{Haarnoja2018SoftAO}). Other curves show performances of DDPG (green), PPO (brown), SQL (pink), and TD3 (violet).}
    \label{fig:eval_CDSAC}
\end{figure}

% % Insert C-DSAC reward rollout curves here old
% \begin{figure}[ht]
%     \centering
%     \begin{subfigure}{0.32\textwidth}
%         \centering
%         \includegraphics[scale=0.4]{Images/Hopper-v4_Rollout_Averaged_Scaled.png}
%         \label{fig:Hopper-v4}
%         \caption{Hopper-v4}
%     \end{subfigure}
%     \begin{subfigure}{0.32 \textwidth}
%         \centering
%         \includegraphics[scale=0.20]{Images/Walker2d-v4_Rollout_Averaged.png}
%         \label{fig:Ant-v4}
%         \caption{Walker-v4}
%     \end{subfigure}
%     \begin{subfigure}{0.32\textwidth}
%         \centering
%         \includegraphics[scale=0.20]{Images/HalfCheetah-v4_Rollout_Averaged.png}
%         \label{fig:Humanoid-v4}
%         \caption{HalfCheetah-v4}
%     \end{subfigure}
%  \\   
%     \begin{subfigure}{0.4\textwidth}
%         \centering
%         \includegraphics[scale=0.2]{Images/Ant-v4_Rollout_Averaged.png}
%         \label{fig:HalfCheetah-v4}
%         \caption{Ant-v4}
%     \end{subfigure}
%     \begin{subfigure}{0.4\textwidth}
%         \centering
%         \includegraphics[scale=0.2]{Images/Humanoid-v4_Rollout_Averaged.png}
%         \label{fig:Walker2d-v4}
%         \caption{Humanoid-v4}
%     \end{subfigure}  
%     \caption{C-DSAC Baselines}
%     \label{fig:C-DSAC_Baselines}
% \end{figure}

% \begin{figure}[ht!]
%  % \centering
%  \hspace{-1.8cm}
%  \includegraphics[scale=0.80]{Images/SAC_Baselines.png}
%  \caption{SAC Baselines}
%  \label{fig:SAC_Baselines}
% \end{figure}

\begin{table}[ht]
    \centering
 \caption{Average performance over 100 runs of the best models in the training runs}
    \begin{tabular}{>{\centering\arraybackslash}m{5cm}| >{\centering\arraybackslash}m{5cm}}
        \hline
        \textbf{Environment} & \textbf{Best Performance} \\
        \hline
        Hopper-v4 & $3352 \pm 97$ \\
        \hline
        \multirow{1}{=}{\centering Ant-4} & $5376 \pm 79$\\ 
        \hline
        Humanoid-v4 & $5715 \pm 255$\\
        \hline
        \multirow{1}{=}{\centering HalfCheetah-4} & $10023 \pm 228$\\ 
        \hline
        Walker-v4 & $4808 \pm 223$ \\
        \hline
    \end{tabular}
    \label{tab:best_performance}
\end{table}

\subsection{Comparative Implementation Evaluation against a Standard Baseline}
To assess the quality of the C-DSAC implementation - and therefore its effect on the performance, its code is modified to revert it to SAC by adjusting the loss function and critic network architecture. The experiments are repeated on HalfCheetah-v4 with five runs, Figure \ref{fig:SAC_Implementation}. The SAC hyperparameter are set to values given in  Appendix \ref{sec:hyperparameters}.  The custom SAC implementation based on the C-DSAC code achieved an average reward of slightly above 3000, compared to over 10000 in the original SAC implementation after one million iterations. The observed performance gap potentially stems from unoptimized implementation details, suggesting that C-DSAC has yet to reach its theoretical performance ceiling.

\begin{figure}[H]
 \centering
 \includegraphics[scale=0.40]{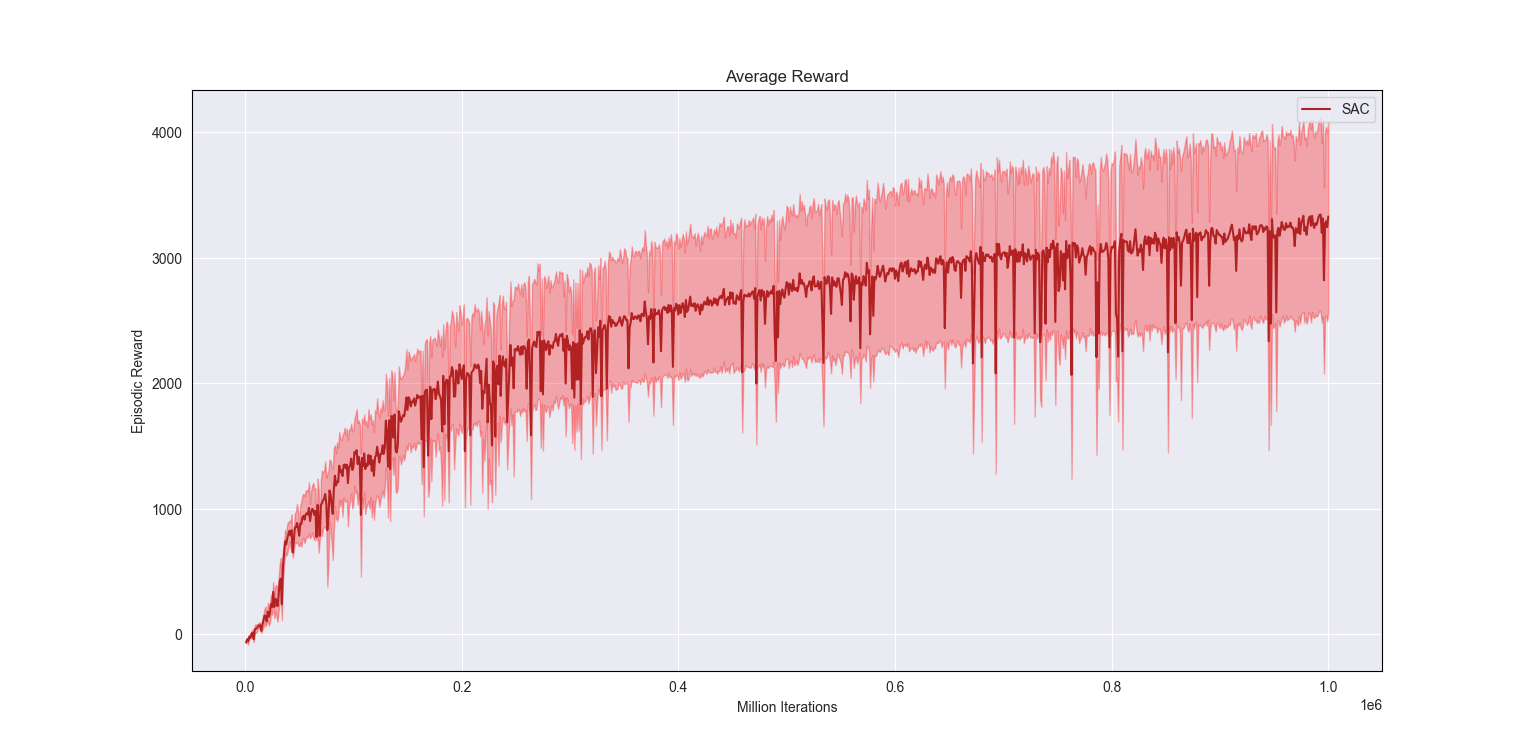}
 \caption{Performance of SAC based on C-DSAC implementation on HalfCheetah-v4}
 \label{fig:SAC_Implementation}
\end{figure}
%----------------------------------------------------------------

\section{Conclusion}
\label{sec:concl}
This work presented C-DSAC, a distributional maximum-entropy deep 
reinforcement learning algorithm. The objective function incorporates an energy distance-based metric to estimate the expected Q-values and their associated variances.
It has been formally established that distributional soft policy iteration, utilizing the energy distance metric for evaluation alongside standard policy improvement, converges to an optimal policy.
Numerical experiments show that C-DSAC outperforms SAC in most benchmark environments despite a sub-optimal implementation. The performance seems to be superior especially for complex environments, 
like the Humanoid-v4 environment.  
The presented analysis suggests that the superior performance of C-DSAC  may among others be attributed to a confidence-driven update of Q-values. It was shown that gradient weights of the critic loss function are small at state-action pairs with high variance. This results in a more conservative update for high-variance targets, reducing adaptation to potentially overestimated values. In this context, high variance indicates low confidence in the target, which improves only after repeated observations. 
As a consequence,  C-DSAC learns Q-values in a two-phase process: First, building confidence in the target, then adapting to its expected value. 
While this confidence-building phase may imply slow and unstable learning, the numerical results indicate that the reduction in the effect of overestimation outweighs the additional cost.
The C-DSAC framework and the current implementation remain open to various optimizations.  Experiments with different initialization values can be carried out, particularly for the critic's variance head. For a fair comparison with SAC baselines, automatic temperature adjustment was omitted, 
but could be integrated to enhance performance. Additionally, an exploration strategy based on the value distribution can be developed.
C-DSAC models the return with a fixed-moment distribution. 
Efforts can be made to efficiently apply GMMs with complex neural network approximators to allow for a more expressive return distribution.
Another promising direction for research is the utilization of task and reward decomposition based on the moments of the value distribution. 

\section*{Acknowledgements}
This work has been funded by  grant LFF FV 90 of the State Research Funding Hamburg (Landesforschungsf\"orderung Hamburg), and by PID2021-123278OB-I00 from the Spanish Ministry of Science and Innovation financed by “ERDF A way of making Europe".

% Manual newpage inserted to improve layout of sample file - not
% needed in general before appendices/bibliography.

\vskip 0.2in
%\bibliography{RL}

\appendix

% \section{Understanding DSAC}
% \begin{lema}
% (Distributional Soft Policy Evaluation with $\sigma$-fitting strategy    
% \end{lema}

% \begin{lema}
% (Soft Policy Improvement)
% \end{lema}

\section{Standard Reinforcement Learning}
The proofs provided below rely on established results \citep{Sutton1998intro, haarnoja2019autotemp}. They are included to ensure the self-containment of this work and to clarify the foundations upon which our Theorems and Lemmas are built.
\subsection{Policy Evaluation} 
\label{sec:policyevaluation_rl}
\begin{lemma}
\label{lem:policyevaluation_rl} 
Let $Q_1(s,a)$ and $Q_2(s,a)$ be two bounded state-action value functions. Assume $|\A| < \infty$, $|\s| < \infty$ and $0 \leq \gamma < 1$.   
Then $\T^{\pi}$ according to \eqref{eq:BO2} is a $\gamma$-contraction, i.e.
\begin{equation}
    \label{eq:contraction2}
    \|\T^{\pi}Q_1 - \T^{\pi}Q_2\|_{\infty} \leq \gamma \|Q_1 - Q_2\|_{\infty}.
\end{equation}
\end{lemma}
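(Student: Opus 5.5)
The plan is to prove the contraction pointwise and then take the supremum over state-action pairs. Fix $(s,a)\in\s\times\A$. By definition \eqref{eq:BO2}, the reward term $r(s,a)$ is common to $\T^{\pi}Q_1(s,a)$ and $\T^{\pi}Q_2(s,a)$. It therefore cancels in the difference, leaving
\begin{equation*}
\T^{\pi}Q_1(s,a)-\T^{\pi}Q_2(s,a)=\gamma\,\E_{s'\sim P(\cdot\mid s,a),\,a'\sim\pi(\cdot\mid s')}\left[Q_1(s',a')-Q_2(s',a')\right].
\end{equation*}
Because $|\s|,|\A|<\infty$ and $Q_1,Q_2$ are bounded, this expectation is a finite sum with nonnegative weights $w(s',a')=P(s'\mid s,a)\pi(a'\mid s')$ summing to one. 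All quantities are therefore well defined and finite.

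The key step is to apply the triangle inequality (Jensen for $|\cdot|$) to this convex combination:
\begin{equation*}
\left|\T^{\pi}Q_1(s,a)-\T^{\pi}Q_2(s,a)\right|\leq\gamma\sum_{s',a'}w(s',a')\left|Q_1(s',a')-Q_2(s',a')\right|.
\end{equation*}
Each term is bounded by $\|Q_1-Q_2\|_{\infty}$, and the weights sum to one. The right-hand side is therefore at most $\gamma\|Q_1-Q_2\|_{\infty}$.

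Since this bound does not depend on $(s,a)$, taking the supremum (a maximum, by finiteness) over $(s,a)$ on the left yields \eqref{eq:contraction2}. This mirrors the structure of the proof of Theorem \ref{theo:policyevaluation_cdsac}, with the absolute value playing the role of $d_e$. The convex combination step here is the scalar analogue of \eqref{eq:energy_convexity}, and the reward cancellation is the analogue of \eqref{eq:sum_invar_rand}.

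There is no real obstacle. The only point requiring care is the convention for terminal states: if $s'$ is absorbing with $Q(s',\cdot)=0$, the corresponding difference term vanishes, so the bound only improves. I would remark that boundedness ensures $\|Q_1-Q_2\|_\infty<\infty$, so the inequality is meaningful. I would also note that, by the Banach fixed point theorem on the complete space of bounded functions, the iterates $Q_{k+1}=\T^\pi Q_k$ converge to $Q^\pi$ at rate $\gamma^k$.
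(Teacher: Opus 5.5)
Your proof is correct and follows essentially the same route as the paper. You cancel the reward, pull out $\gamma$, move the absolute value inside the expectation, and bound by $\|Q_1-Q_2\|_{\infty}$; the paper does this in two stages (first over $s_{t+1}$, then over actions), while you use the joint weights $P(s'\mid s,a)\pi(a'\mid s')$ at once, and your closing Banach fixed-point remark matches the paper's conclusion.
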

\begin{proof}
\label{eq:eval_rl_ac}
To show that \eqref{eq:contraction2} holds, the left-hand side is expanded. Set $\E_{P} [\cdot] := \E_{s_{t+1} \sim P(\cdot \mid s_t, a_t)}[\cdot]$ and $\E_{a_{t} \sim \pi(\cdot \mid s_{t})}[Q(s_t, a_t)] := Q(s_{t}, \pi(\cdot \mid s_t))$, then
\begin{align*}
&\|\T^{\pi}Q_1 - \T^{\pi}Q_2\|_{\infty} \\
&=\|\left(r(s_t,a_t) + \gamma \E_{P}[ Q_1(s_{t+1}, \pi(\cdot \mid s_{t+1}))]\right) - \left(r(s_t,a_t) + \gamma\E_{P}[ Q_2(s_{t+1}, \pi(\cdot \mid s_{t+1}))] \right)\|_{\infty} \\
&= \sup_{s_t \in \s,a_t \in \A}  |\gamma\E_{P}[ Q_1(s_{t+1}, \pi(\cdot \mid s_{t+1})) ]- \gamma \E_{P}[Q_2(s_{t+1}, \pi(\cdot \mid s_{t+1}))]|\\
&= \gamma \sup_{s_t \in \s,a_t \in \A}| \E_{P}[(Q_1(s_{t+1}, \pi(\cdot \mid s_{t+1})) - Q_2(s_{t+1}, \pi(\cdot \mid s_{t+1})))]|\\
&\leq \gamma \sup_{s_t \in \s,a_t \in \A} \E_{P}[|Q_1(s_{t+1}, \pi(\cdot \mid s_{t+1})) - Q_2(s_{t+1}, \pi(\cdot \mid s_{t+1}))| ]\\
&\leq \gamma \sup_{s \in \s} |Q_1(s, \pi(\cdot \mid s)) - Q_2(s, \pi(\cdot\mid s))| \\
& \leq \gamma \sup_{s \in \s, a \in \A} |Q_1(s,a) - Q_2(s,a)| = \gamma \|Q_1 - Q_2\|_{\infty}. \\
\end{align*}
Since $\T^{\pi}$  is a $\gamma$-contraction,
it follows from the  Banach theorem that  
the series $\{Q_k\}$ with $Q_{k+1} := \T^\pi Q_k$
converges to  a unique fixed point $Q^{\pi}$, i.e. $Q^\pi = \T^\pi Q^\pi$ and $\|Q_k - Q^{\pi}\|_{\infty}\rightarrow 0$ as $k \rightarrow \infty$.
\end{proof}

\subsection{Policy Improvement}
\label{sec:policyimprovement_rl}

\begin{lemma}
\label{lem:policyimprovement_rl}
 Given the policy update rule in 
\eqref{eq:policyupdate_rl} and assuming bounded state-action value functions, $|\A| < \infty$, $0 \leq \gamma < 1$ then 

$\E_{a_t \sim \pi_k(\cdot \mid s_t)}[Q^{\pi_k}(s_t, a_t)] \leq \E_{a_t \sim \pi_{k+1}(\cdot \mid s_t)}[Q^{\pi_{k+1}}(s_t, a_t)] \quad \forall s_t \in \s$.
\end{lemma}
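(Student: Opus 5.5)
The plan is the classical policy improvement argument: first a one-step improvement inequality, then iteration of the Bellman equation to pass from one step to the full value function. Fix $s_t$ and write $V^{\pi}(s) := \E_{a\sim\pi(\cdot\mid s)}[Q^{\pi}(s,a)]$. The statement claims $V^{\pi_k}(s_t)\le V^{\pi_{k+1}}(s_t)$ for every $s_t$.

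\textbf{Step 1 (one-step improvement).} Since $\pi_k\in\Pi$ is a feasible candidate in the maximisation \eqref{eq:policyupdate_rl}, the definition of $\pi_{k+1}$ gives
\begin{equation*}
V^{\pi_k}(s)=\E_{a\sim\pi_k(\cdot\mid s)}[Q^{\pi_k}(s,a)]\le \E_{a\sim\pi_{k+1}(\cdot\mid s)}[Q^{\pi_k}(s,a)] \quad \forall s\in\s .
\end{equation*}
Finiteness of $\A$ and boundedness of $Q^{\pi_k}$ guarantee that the maximum exists; a greedy deterministic policy attains it.

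\textbf{Step 2 (unrolling).} Expand the right-hand side of Step 1 with the Bellman equation \eqref{eq:ac_bellman} for $Q^{\pi_k}$:
\begin{equation*}
\E_{a_t\sim\pi_{k+1}}[Q^{\pi_k}(s_t,a_t)] = \E_{a_t\sim\pi_{k+1}}\big[r(s_t,a_t)+\gamma\,\E_{s_{t+1}\sim P}[V^{\pi_k}(s_{t+1})]\big].
\end{equation*}
Apply Step 1 again at $s_{t+1}$, which replaces $V^{\pi_k}(s_{t+1})$ by something at least as large. Repeating this $n$ times yields
\begin{equation*}
V^{\pi_k}(s_t)\le \E_{\pi_{k+1},P}\Big[\sum_{j=0}^{n-1}\gamma^j r(s_{t+j},a_{t+j})\Big]+\gamma^n\,\E_{\pi_{k+1},P}[V^{\pi_k}(s_{t+n})].
\end{equation*}

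\textbf{Step 3 (limit).} Let $n\to\infty$, or stop at the horizon $T$ where $s_T$ is absorbing. The remainder is bounded by $\gamma^n\|Q^{\pi_k}\|_\infty$, which vanishes because $0\le\gamma<1$ and the value functions are bounded. The partial sums converge to $\E_{a_t\sim\pi_{k+1}}[Q^{\pi_{k+1}}(s_t,a_t)] = V^{\pi_{k+1}}(s_t)$ by the definition \eqref{eq:q_ac}. An equivalent route is monotonicity: the one-step inequality says $Q^{\pi_k}\le \T^{\pi_{k+1}}Q^{\pi_k}$, and $\T^{\pi_{k+1}}$ is monotone, so iterating gives $Q^{\pi_k}\le (\T^{\pi_{k+1}})^nQ^{\pi_k}\to Q^{\pi_{k+1}}$ by Lemma \ref{lem:policyevaluation_rl}. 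Averaging this over $a\sim\pi_{k+1}(\cdot\mid s_t)$ and combining with Step 1 gives the claim.

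The main obstacle is Step 2. It needs careful bookkeeping of the nested expectations, and it must justify applying the pointwise inequality of Step 1 inside $\E_{s_{t+1}\sim P}$, which holds because expectation is monotone. The monotone-operator formulation makes this cleanest, so I would present the proof that way.
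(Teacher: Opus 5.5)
Your proposal is correct and follows the paper's proof, which likewise combines the one-step greedy inequality with repeated Bellman expansion under $\pi_{k+1}$, unrolling up to the horizon $T$ so that no $Q^{\pi_k}$ term is left. Your monotone-operator version ($Q^{\pi_k}\le\T^{\pi_{k+1}}Q^{\pi_k}$, iterate, then pass to the limit via Lemma \ref{lem:policyevaluation_rl}) is the same argument in compressed form, and it also yields the pointwise bound $Q^{\pi_k}\le Q^{\pi_{k+1}}$ on all of $\s\times\A$.
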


\begin{proof}
The proof relies on expanding the Q-function in the Bellman equation repeatedly with actions of the improved policy.  For convenience, the notation $\pi' := \pi_{k+1}$ and $\pi := \pi_k$ and $\E_{\pi', P, k}[\cdot \mid s_t] :=
\E_{a_{t:k-1}\sim \pi'(\cdot\mid s_{t:k-1}), s_{t+1:k} \sim P(\cdot \mid s_{t:k-1}, a_{t:k-1})}[\cdot \mid s_t]$ is used. Let $ Q^{\pi}(s_t, \pi(\cdot \mid s_t)) := \E_{a_t \sim \pi(\cdot \mid s_t)}[Q^{\pi}(s_t,a_t)]$, then

\begin{align*}
Q^{\pi}(s_t, \pi(\cdot \mid s_t)) &\leq Q^{\pi}(s_t, \pi'(\cdot \mid s_t)) \\
&=\E_{\pi', P, t+1}[r(s_t,a_t) + \gamma Q^{\pi}(s_{t+1}, \pi(\cdot \mid s_{t+1})) \mid s_t] \\
&\leq \E_{\pi', P, t+1}[r(s_t, a_t) + \gamma Q^{\pi}(s_{t+1},\pi'(\cdot \mid s_{t+1})) \mid s_t] \\
&=\E_{\pi', P, t+2}[r(s_t,a_t) + \gamma (r(s_{t+1}, a_{t+1}) + \gamma Q^{\pi}(s_{t+2}, \pi(\cdot \mid s_{t+2}))) \mid s_t] \\
&= \E_{\pi', P, t+2}[r(s_t, a_t) + \gamma r(s_{t+1}, a_{t+1}) + \gamma^2 Q^{\pi}(s_{t+2}, \pi(\cdot \mid s_{t+2})) \mid s_t] \\
& \leq \E_{\pi', P, t+3}[r(s_t, a_t) + \gamma r(s_{t+1}, a_{t+1}) + \gamma^2 r(s_{t+2}, a_{t+2}) + \gamma^3 Q^{\pi}(s_{t+3}, \pi(\cdot \mid s_{t+3})) \mid s_t] \\
&\vdots \\
& \leq \E_{\pi', P, T-1}[r(s_t, a_t) + \gamma r(s_{t+1}, a_{t+1}) + \gamma^2 r(s_{t+2}, a_{t+2}) + \hdots + \gamma^{T-1} r(s_{T-1}, a_{T-1}) \mid s_t] \\
&= Q^{\pi'}(s_t, \pi'(\cdot \mid s_t)).
\end{align*}
Thus for any $s_t$,
$$ \E_{a_t \sim \pi_k(\cdot \mid s_t)}[Q^{\pi_k}(s_t,a_t)] = Q^{\pi}(s_t,\pi(\cdot \mid s_t)) \leq Q^{\pi'}(s_t ,\pi'(\cdot \mid s_t)) = \E_{a_t \sim \pi_{k+1}(\cdot \mid s_t)}[Q^{\pi_{k+1}}(s_t,a_t)].$$ 
\end{proof}

%---------------

\subsection{Policy Iteration}
\label{sec:policyiteration_rl}
\begin{lemma}
\label{lem:policyiteration_rl}
Let the reward be bounded, $|\A|<\infty$, $|\s|<\infty$, and $0 \leq \gamma < 1$. Let $\Pi$ be a the set of all stationary (possibly stochastic) policies. In particular, $\Pi$ contains the point-mass policies. Alternating between exact policy evaluation and policy improvement from some initial policy $\pi_0 \in \Pi$ and state-action function $Q^{(0)}$, the process terminates after finitely many improvement steps at the optimal policy $\pi^*$ satisfying $Q^{\pi^*} = Q^*$. 
\end{lemma}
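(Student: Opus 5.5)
We follow the classical policy iteration argument: exact evaluation via Lemma \ref{lem:policyevaluation_rl}, monotone improvement via Lemma \ref{lem:policyimprovement_rl}, and finiteness of the candidate policy set to force termination. Since $|\s|<\infty$ and $|\A|<\infty$, we restrict attention to deterministic (point-mass) policies, which lie in $\Pi$ by assumption. There are at most $|\A|^{|\s|}$ of these. If $\pi_0$ is stochastic, the first greedy step \eqref{eq:policyupdate_rl} can be taken to return a point mass, since $\E_{a\sim\pi}Q(s,a)$ is linear in $\pi(\cdot\mid s)$ and is therefore maximized at a vertex of the simplex. We fix a deterministic tie-breaking rule in the $\argmax$, say the smallest action index among maximizers, so that the update map $\pi\mapsto\pi'$ is well defined.

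The first step is evaluation. For each $\pi_k$, Lemma \ref{lem:policyevaluation_rl} gives a unique bounded fixed point $Q^{\pi_k}$, so $Q^{(0)}$ plays no role beyond initialization.

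The second step is improvement. The proof of Lemma \ref{lem:policyimprovement_rl} in fact shows the pointwise statement $Q^{\pi_k}\le Q^{\pi_{k+1}}$. The argument is to expand $Q^{\pi_k}(s,a)=r(s,a)+\gamma\E_P[Q^{\pi_k}(s',\pi_k(s'))]$, bound each inner term by $Q^{\pi_k}(s',\pi_{k+1}(s'))$, and telescope up to the horizon $T$, or use a geometric tail in the infinite-horizon case.

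We also refine the tie-breaking rule. We keep $\pi_{k+1}(s)=\pi_k(s)$ whenever $\pi_k(s)$ already attains the maximum. With this convention, if $\pi_{k+1}\neq\pi_k$, then at some state $s$ we have $Q^{\pi_k}(s,\pi_{k+1}(s))>Q^{\pi_k}(s,\pi_k(s))$. Propagating this strict inequality through the same telescoping gives $Q^{\pi_{k+1}}\ge Q^{\pi_k}$ with strict inequality at $(s,\pi_{k+1}(s))$. Hence no policy can recur, because recurrence would force equality of value functions. Since there are finitely many deterministic policies, the process must stop after finitely many steps at some $\pi^*$ with $\pi^*=\pi^{*\prime}$.

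The third step establishes optimality at termination. Stationarity means $Q^{\pi^*}(s,\pi^*(s))=\max_a Q^{\pi^*}(s,a)$ for every $s$. Combined with the Bellman equation \eqref{eq:ac_bellman}, this shows that $Q^{\pi^*}$ satisfies the Bellman optimality equation
$$Q(s,a)=r(s,a)+\gamma\E_{s'\sim P(\cdot\mid s,a)}\max_{a'}Q(s',a').$$
The optimality operator is also a $\gamma$-contraction in $\|\cdot\|_\infty$; the proof mirrors Lemma \ref{lem:policyevaluation_rl} using $|\max f-\max g|\le\max|f-g|$. Its fixed point is therefore unique, and $Q^{\pi^*}=Q^*$.

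The main obstacle is the strict-improvement and termination step. Lemma \ref{lem:policyimprovement_rl} as stated only gives a weak, state-averaged inequality, so we must redo its telescoping pointwise and track where the strict inequality enters. A careful tie-breaking rule is required; without it, the algorithm may cycle among equally good policies and never terminate, even though every policy visited is optimal. A secondary subtlety is that the finite-horizon setting makes $Q$ time-dependent in principle. We would handle this by treating time as part of the state or by working with the stationary discounted formulation, as the paper's notation implicitly does.
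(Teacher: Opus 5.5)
Your route is the paper's: exact evaluation by Lemma~\ref{lem:policyevaluation_rl}, monotone improvement from Lemma~\ref{lem:policyimprovement_rl}, finitely many deterministic greedy policies under a fixed tie-breaking rule, and termination at a policy that is greedy with respect to its own $Q$, hence optimal. Your keep-the-incumbent tie-breaking is more careful than the paper's smallest-index rule. Under that rule the policy can change among tied actions with no strict increase in $V$, so the paper's implication ``$\pi_{k+1}\neq\pi_k$ implies strict increase'' is not literally true. Your contraction argument for uniqueness of the optimality fixed point also makes explicit what the paper only asserts.

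The step that fails as written is where you place the strict inequality. A strict improvement at $s$ does not give $Q^{\pi_{k+1}}(s,\pi_{k+1}(s))>Q^{\pi_k}(s,\pi_{k+1}(s))$, and in fact $Q^{\pi_{k+1}}$ need not exceed $Q^{\pi_k}$ anywhere.

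Take one non-terminal state $s$ with actions $a_1,a_2$, both leading to the absorbing terminal state, with $r(s,a_1)=0$ and $r(s,a_2)=1$. Then $Q^{\pi}(s,a_1)=0$ and $Q^{\pi}(s,a_2)=1$ for every $\pi$. Switching from $\pi_0(s)=a_1$ to $\pi_1(s)=a_2$ therefore leaves the $Q$-function unchanged. Your argument that ``recurrence would force equality of value functions'' then yields no contradiction at the level of $Q$.

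The telescoping actually places the strictness in the state value:
\[
V^{\pi_k}(s)=Q^{\pi_k}(s,\pi_k(s))<Q^{\pi_k}(s,\pi_{k+1}(s))\le Q^{\pi_{k+1}}(s,\pi_{k+1}(s))=V^{\pi_{k+1}}(s),
\]
and only the first inequality is strict. Run the no-recurrence argument on $V^{\pi_k}$ instead. It is pointwise non-decreasing, and under your tie-breaking rule it strictly increases at some state whenever the policy changes. So $\pi_j=\pi_m$ with $j<m$ would contradict $V^{\pi_m}(s)\ge V^{\pi_{j+1}}(s)>V^{\pi_j}(s)$. With this repair the proof goes through.
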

\begin{proof}
Fix $k$. By Lemma \ref{lem:policyevaluation_rl}, the inner evaluation sequence $\{Q_m\}$ for $Q_{m+1}:= \T^{\pi_k}Q_m$ approaches $Q^{\pi_k}$, i.e. $||Q_m - Q^{\pi_k}||_{\infty} \rightarrow 0$ as $m \rightarrow \infty$. It is established by Lemma \ref{lem:policyimprovement_rl} that an improvement of policy $\pi_{k}$ by Eq. \eqref{eq:policyupdate_rl}, yielding $\pi_{k+1}$, satisfies 
\\
$V^{\pi_k}(s_t) =\E_{a_t \sim \pi_k(\cdot \mid s_t)}[Q^{\pi_k}(s_t, a_t)] \leq \E_{a_t \sim \pi_{k+1}(\cdot \mid s_t)}[Q^{\pi_{k+1}}(s_t, a_t)] = V^{\pi_{k+1}}(s_t) \forall s_t \in \s$. 
Although policies are treated as stochastic mappings, the $\argmax$ in Eq. \eqref{eq:policyupdate_rl} implies that actions are taken deterministically, provided the action set is finite.
With finite states and actions and ties being broken deterministically per state (e.g. lexicographic smallest action), the set of greedy policies is also finite.
Under these circumstances, a greedy policy for a fixed $Q^{\pi_k}$ is unique, so $\pi_{k+1} \neq \pi_k$ implies strict increase in the value $V$ for at least one state.
Eventually $\pi_{k+1} = \pi_k$ in finite time, implying convergence to a fixed point $\pi_{k+1} = \pi_k = \pi^*$. 
The terminating policy $\pi^*$ is greedy w.r.t. its own $Q$, therefore satisfies the Bellman optimality equation and thus $Q^{\pi^*} = Q^*$. 
\end{proof}
%----------------------
%---------------------
\section{Maximum-Entropy Reinforcement Learning}
\label{sec:hrl_proofs}
\subsection{Soft Evaluation}
\begin{lemma}
\label{lem:policyevaluation_hr}
Let $Q_1(s,a)$ and $Q_2(s,a)$ be two bounded state-action value functions. Assume $|\mathcal{A}| < \infty$, $|\mathcal{S}| < \infty$, $0 \leq \gamma < 1$ and $\alpha \geq 0$ . Then the soft Bellman operator $\T_h^{\pi}$ according to \eqref{eq:BOH} is a $\gamma$-contraction
\begin{equation}
    \label{eq:contractH}
    \|\T_h^{\pi}Q_1 - \T_h^{\pi}Q_2\|_{\infty} \leq \gamma \|Q_1 - Q_2\|_{\infty}.
\end{equation}
\end{lemma}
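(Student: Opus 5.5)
The plan is to mirror the proof of Lemma \ref{lem:policyevaluation_rl}, treating the entropy bonus as part of a modified reward. We abbreviate the next-state expectation as $\E_P[\cdot] := \E_{s_{t+1}\sim P(\cdot\mid s_t,a_t)}[\cdot]$. We also write $Q(s,\pi(\cdot\mid s)) := \E_{a\sim\pi(\cdot\mid s)}[Q(s,a)]$ as before.

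With this notation, expanding \eqref{eq:BOH} gives
$$\T_h^{\pi}Q(s_t,a_t) = r(s_t,a_t) + \gamma\,\E_P\big[Q(s_{t+1},\pi(\cdot\mid s_{t+1})) + \alpha\h(\pi(\cdot\mid s_{t+1}))\big].$$
Two observations make the rest of the argument routine.

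\begin{itemize}
\item The entropy $\h(\pi(\cdot\mid s))$ depends only on the fixed policy $\pi$ and the state, not on $Q$.
\item Because $|\A|<\infty$, the entropy is bounded by $\log|\A|$, so every term is finite.
\end{itemize}

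\textbf{Step 1 (cancel the $Q$-independent terms).} Form the difference $\T_h^{\pi}Q_1 - \T_h^{\pi}Q_2$ at a fixed pair $(s_t,a_t)$. The terms $r(s_t,a_t)$ and $\gamma\alpha\,\E_P[\h(\pi(\cdot\mid s_{t+1}))]$ appear identically in both and cancel; linearity of expectation justifies this. What remains is
$$\gamma\,\E_P\big[Q_1(s_{t+1},\pi(\cdot\mid s_{t+1})) - Q_2(s_{t+1},\pi(\cdot\mid s_{t+1}))\big].$$

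\textbf{Step 2 (bound the remainder).} From here the chain of inequalities is the same as in the proof of Lemma \ref{lem:policyevaluation_rl}:
\begin{itemize}
\item move the absolute value inside the expectation (Jensen);
\item bound the expectation by the supremum over $s$, using $|\s|<\infty$;
\item bound $|\E_{a\sim\pi}[Q_1(s,a)-Q_2(s,a)]|$ by $\sup_{a}|Q_1(s,a)-Q_2(s,a)|$, since $\pi(\cdot\mid s)$ is a probability distribution.
\end{itemize}
Taking the supremum over $(s_t,a_t)$ then gives \eqref{eq:contractH}. The case $\alpha=0$ is covered trivially, since the operator then reduces to $\T^\pi$.

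The only point needing care is the cancellation in Step 1: the entropy term must be finite, so that the subtraction is not of the form $\infty-\infty$. This is exactly where the assumption $|\A|<\infty$ is used, through the bound $0\le\h\le\log|\A|$. The expression $\T_h^\pi Q$ is then bounded whenever $Q$ is bounded, which also lets the Banach fixed point argument apply afterward.
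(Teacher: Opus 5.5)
Your proposal is correct and follows essentially the same route as the paper. The paper also absorbs the $Q$-independent entropy bonus (finite because $|\A|<\infty$) into a modified reward $r_h$, so that $\T_h^\pi Q_1 - \T_h^\pi Q_2$ reduces to the difference already handled in Lemma~\ref{lem:policyevaluation_rl}, and it then reuses that chain of inequalities, with bounded entropy keeping $\T_h^\pi$ on a Banach space exactly as you note.
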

\begin{proof}
By linearity of expectation, the entropy term can be separated from the Q-function. Define 
$$r_h(s_t, a_t) := r(s_t, a_t) + \gamma \E_{s_{t+1}\sim P(\cdot \mid s_t, a_t)}[\alpha \mathcal{H}(\pi(\cdot \mid s_{t+1}))].$$
Note that
$$\T_h^{\pi}Q(s_t, a_t) = r_h(s_t,a_t) + \gamma \E_{\substack{s_{t+1} \sim P(\cdot \mid s_t,a_t), \\ a_{t+1} \sim \pi(\cdot \mid s_{t+1})}}[Q(s_{t+1}, a_{t+1})]$$ and since entropy is bounded for discrete actions, $\T_h^{\pi}$ acts on a Banach space.
Then the proof of Lemma \ref{lem:policyevaluation_rl} can be applied such that 
\begin{equation}
    \label{eq:contraction}
    \|\T_h^{\pi}Q_1 - \T_h^{\pi}Q_2\|_{\infty} \leq \gamma \|Q_1 - Q_2\|_{\infty}.
\end{equation}
Thus if  $Q_{k+1} :=\T_h^{\pi}Q_k$, the series $\{Q_k\}$ converges to $Q_h^{\pi}$, i.e. $\|Q_k - Q_h^{\pi}\|_{\infty} \rightarrow 0$ as $k \rightarrow \infty$.
\end{proof}
%---------------------------
%-------------------------------
\subsection{Soft Policy Improvement}
It must be shown that \eqref{eq:policy_improvement_sac} improves the policy.

\begin{lemma}
\label{lem:policyimprovement_hrl_i} (One-step soft improvement). Let $s_t$ be fixed and $$Q_h^{\pi}(s_t,a_t) := r(s_t,a_t) + \gamma \E_{\substack{s_{t+1} \sim P(\cdot \mid s_t, a_t) \\a_{t+1} \sim \pi(\cdot \mid s_{t+1})}}[Q_h^{\pi}(s_{t+1}, a_{t+1}) - \alpha \log{\pi(a_{t+1} \mid s_{t+1})}].$$ If the policy is updated according to \eqref{eq:policy_improvement_sac}, $Q_h$ is bounded, $\alpha > 0$, $0 \leq \gamma < 1$ and $|\A| < \infty$, then
$$\E_{a_t \sim \pi_{k+1}(\cdot \mid s_t)}[Q_h^{\pi_k}(s_t,a_t) - \alpha \log{\pi_{k+1}(a_t\given s_t)}] \geq  \E_{a_t \sim \pi_{k}(\cdot \mid s_t)}[Q_h^{\pi_k}(s_t,a_t) - \alpha \log{\pi_{k}(a_t\given s_t)}].$$
\end{lemma}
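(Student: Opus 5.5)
The argument is the standard Gibbs-variational (Donsker--Varadhan) argument. Fix $s_t$ and abbreviate $q(a):=Q_h^{\pi_k}(s_t,a)$, which is finite because $Q_h$ is bounded and $|\A|<\infty$. Introduce the Boltzmann distribution
$$p(a):=\frac{\exp(\tfrac{1}{\alpha}q(a))}{W^{\pi_k}(s_t)},\qquad W^{\pi_k}(s_t)=\sum_{a\in\A}\exp(\tfrac{1}{\alpha}q(a)).$$
It is well defined since $\alpha>0$ and the sum is finite and positive.

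For an arbitrary policy $\pi(\cdot\mid s_t)$, expand the Kullback--Leibler divergence in \eqref{eq:policy_improvement_sac}:
$$d_{kl}\bigl(\pi(\cdot\mid s_t)\,\|\,p\bigr)=\sum_a\pi(a\mid s_t)\bigl(\log\pi(a\mid s_t)-\tfrac{1}{\alpha}q(a)\bigr)+\log W^{\pi_k}(s_t).$$
Multiplying by $-\alpha$ gives the identity
$$\E_{a\sim\pi(\cdot\mid s_t)}\bigl[q(a)-\alpha\log\pi(a\mid s_t)\bigr]=\alpha\log W^{\pi_k}(s_t)-\alpha\, d_{kl}\bigl(\pi(\cdot\mid s_t)\,\|\,p\bigr).$$
The first term does not depend on $\pi$. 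Hence minimizing the KL divergence over $\pi(\cdot\mid s_t)$ is exactly the same as maximizing the soft one-step objective $\E_{\pi}[q-\alpha\log\pi]$, since $\alpha>0$.

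By the definition \eqref{eq:policy_improvement_sac}, $\pi_{k+1}(\cdot\mid s_t)$ minimizes the divergence over all policies. In particular its divergence is no larger than that of the admissible candidate $\pi_k(\cdot\mid s_t)$:
$$d_{kl}(\pi_{k+1}\|p)\le d_{kl}(\pi_k\|p).$$
Substituting both policies into the identity yields the claimed inequality. In fact the minimum is $0$, attained at $\pi_{k+1}=p$ because the policy class is unrestricted, but the argument only needs that $\pi_k$ is feasible.

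The one technical point to check is that every term is finite, so that subtracting $\log W^{\pi_k}(s_t)$ is legitimate. The quantity $q$ is bounded and the entropy of a distribution on a finite set is at most $\log|\A|$. Terms with $\pi(a\mid s_t)=0$ use the convention $0\log 0=0$, and $p$ has full support, so $d_{kl}(\pi\|p)<\infty$ for every $\pi$. Beyond this finiteness check there is no real obstacle, since the lemma reduces to the identity above.
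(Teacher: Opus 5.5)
Your proof is correct and takes the same route as the paper. Both arguments expand the Kullback--Leibler divergence as $\E_{\pi}[\log\pi - \frac{1}{\alpha}Q_h^{\pi_k}] + \log W^{\pi_k}(s_t)$, compare the minimizer $\pi_{k+1}$ with the feasible candidate $\pi_k$, and cancel the action-independent partition term; your finiteness check (bounded $Q_h$, finite $\A$, full support of the Boltzmann target) is a small addition that the paper leaves implicit.
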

\begin{proof}
\begin{eqnarray}
%\hspace{4pt}
\pi_{k+1} (\cdot \mid s_t) &=& \argmin_{\pi \in \Pi}{D_{KL} \left( \pi(\cdot\given s_t)\
\mathrel{\Big|\Big|} \ \frac{\exp{(\frac{1}{\alpha}Q_h^{\pi_k}(s_t, \cdot))}}{W^{\pi_k}(s_t)} \right)} \nonumber\\
&=& \argmin_{\pi \in \Pi} \E_{a_t \sim \pi(\cdot \mid s_t)} \left[\log\left(\frac{\pi(a_t\given s_t) W^{\pi_k}(s_t)}{\exp{\frac{1}{\alpha}Q_h^{\pi_k}(s_t, a_t)}}\right) \right]  \nonumber\\
&=& \argmin_{\pi \in \Pi} \E_{a_t \sim \pi(\cdot \mid s_t)} \left[ \log{\pi(a_t\given s_t)} - \frac{1}{\alpha} Q_h^{\pi_k}(s_t, a_t) + \log{W^{\pi_k}(s_t)} \right]. \nonumber
\end{eqnarray}
Due to minimization, it holds
$\E_{a_t \sim \pi_{k+1}(\cdot \mid s_t)} \left[ \log{\pi_{k+1}(a_t\given s_t)} - \frac{1}{\alpha} Q_h^{\pi_k}(s_t, a_t) + \log{W^{\pi_k}(s_t)} \right] \leq \E_{a_t \sim \pi_k(\cdot\mid s_t)} \left[ \log{\pi_{k}(a_t\given s_t)} - \frac{1}{\alpha} Q_h^{\pi_k}(s_t, a_t) + \log{W^{\pi_k}(s_t)} \right]$
and since the partition function does not depend on the action, it follows
$$\E_{a_t \sim \pi_{k+1}(\cdot \mid s_t)}[Q_h^{\pi_k}(s_t,a_t) - \alpha \log{\pi_{k+1}(a_t\given s_t)}] \geq  \E_{a_t \sim \pi_{k}(\cdot \mid s_t)}[Q_h^{\pi_k}(s_t,a_t) - \alpha \log{\pi_{k}(a_t\given s_t)}].$$
\end{proof}

The soft Q-value is expanded to show that $\pi_{k+1}$ is indeed an improvement on $\pi_k$ in the following Lemma. 
%------------------------
%------------------------
\begin{lemma}
\label{lem:policyimprovement_hrl_ii} (Global soft policy improvement)
Assume $|\A|<\infty$, $\alpha>0$, $0 \leq \gamma \leq 1$ and bounded $Q_h^{\pi}$.  
If  $\pi_{k+1}$ is obtained according to the soft policy improvement in \eqref{eq:policy_improvement_sac}, then 
$$Q_h^{\pi_k}(s_t, a_t) \leq Q_h^{\pi_{k+1}}(s_t, a_t) \quad\forall (s_t, a_t) \in \s \times \A, $$
when applying Lemma \ref{lem:policyimprovement_hrl_i} pointwise at each successor state.
\end{lemma}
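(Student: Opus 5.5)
The plan is to repeat the classical policy-improvement unrolling from Lemma \ref{lem:policyimprovement_rl}, with the one-step soft inequality of Lemma \ref{lem:policyimprovement_hrl_i} playing the role of the greedy step. Write $\pi:=\pi_k$ and $\pi':=\pi_{k+1}$. For any $s$, define the soft value of the old $Q$ under a policy $\rho$ as
\[
V_\rho(s):=\E_{a\sim\rho(\cdot\mid s)}\big[Q_h^{\pi}(s,a)-\alpha\log\rho(a\mid s)\big].
\]
Lemma \ref{lem:policyimprovement_hrl_i}, applied pointwise at every state, gives $V_\pi(s)\le V_{\pi'}(s)$ for all $s\in\s$.

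First I would expand $Q_h^{\pi}(s_t,a_t)$ by its defining recursion:
\[
Q_h^{\pi}(s_t,a_t)=r(s_t,a_t)+\gamma\,\E_{s_{t+1}\sim P(\cdot\mid s_t,a_t)}\big[V_\pi(s_{t+1})\big].
\]
Since $\gamma\ge 0$ and expectation is monotone, I can bound $V_\pi(s_{t+1})\le V_{\pi'}(s_{t+1})$ inside the expectation. Then I would rewrite $V_{\pi'}(s_{t+1})$ as an expectation over $a_{t+1}\sim\pi'$ of $Q_h^{\pi}(s_{t+1},a_{t+1})-\alpha\log\pi'(a_{t+1}\mid s_{t+1})$, and expand $Q_h^{\pi}(s_{t+1},a_{t+1})$ again by the recursion. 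This yields a term $\gamma^2\E[V_\pi(s_{t+2})]$, to which the same bound applies.

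Iterating $n$ times produces
\[
Q_h^{\pi}(s_t,a_t)\le \E_{\pi',P}\Big[r(s_t,a_t)+\sum_{j=1}^{n}\gamma^j\big(r(s_{t+j},a_{t+j})-\alpha\log\pi'(a_{t+j}\mid s_{t+j})\big)\Big]+\gamma^{n+1}\,\E_{\pi',P}\big[V_\pi(s_{t+n+1})\big],
\]
where every action from step $t+1$ onward is drawn from $\pi'$. In the episodic setting the expansion stops at the absorbing state $s_T$, where the continuation value is zero, so the remainder vanishes after finitely many steps. In the discounted infinite-horizon reading, the remainder is bounded by $\gamma^{n+1}\sup|V_\pi|$. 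That supremum is finite because $Q_h^{\pi}$ is bounded and the entropy over a finite $\A$ is at most $\log|\A|$, so the remainder tends to zero as $n\to\infty$.

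The right-hand side then converges to the entropy-augmented return of $\pi'$ started from $(s_t,a_t)$. I would identify that return with $Q_h^{\pi'}(s_t,a_t)$ using the uniqueness of the fixed point of the soft Bellman operator $\T_h^{\pi'}$ from Lemma \ref{lem:policyevaluation_hr}. This gives $Q_h^{\pi}\le Q_h^{\pi'}$ pointwise.

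The main obstacle is the limit step. I need integrability of the log-policy terms, which is fine for finite $\A$ as long as one uses the convention $0\log 0=0$. I also need a valid interchange of the limit with the expectation, which follows from dominated convergence given the uniform bound. Finally, the statement allows $\gamma=1$, and there the remainder argument works only with the finite horizon $T$. I would handle that case by stopping the unrolling at the terminal state rather than taking a limit.
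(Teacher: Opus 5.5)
Your proposal is correct and follows essentially the same route as the paper: unroll the soft Bellman recursion, applying the one-step improvement of Lemma~\ref{lem:policyimprovement_hrl_i} at every successor state. The differences are only in presentation. The paper first unrolls from $V_h^{\pi_k}(s_t)$ over the finite horizon $T$ to get $V_h^{\pi_k}\le V_h^{\pi_{k+1}}$, then takes a single Bellman step to obtain the $Q$-inequality; you instead unroll directly from $Q_h^{\pi_k}(s_t,a_t)$ and also bound the tail term explicitly for the infinite-horizon case.
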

\begin{proof}
Analogous to  Lemma \ref{lem:policyimprovement_rl}, the proof relies on expanding the soft Q-function in the Bellman equation repeatedly with actions of the improved policy.
For convenience, define $\pi' := \pi_{k+1}$, $\pi := \pi_k$ and
$\E_{\pi', P, k}[\cdot \mid s_t] :=
\E_{a_{t:k-1}\sim \pi'(\cdot\mid s_{t:k-1}), s_{t+1:k} \sim P(\cdot \mid s_{t:k-1}, a_{t:k-1})}[\cdot \mid s_t]$. 
Applying the one-step improvement from Lemma \ref{lem:policyimprovement_hrl_i}) at each successor state yields
\begin{align*}
V_h^{\pi}(s_t) 
&\leq \E_{\pi', P, t+1}[Q_h^{\pi}(s_t,a_t) -  \alpha \log{\pi'(a_t \mid s_t)} \mid s_t] \\
&= \E_{\pi', P, t+1}[r(s_t,a_t) - \alpha \log \pi'(a_t \mid s_t) + \gamma V_h^{\pi}(s_{t+1}) \mid s_t] \\
&\leq \E_{\pi', P, t+2} [ r(s_t,a_t) - \alpha \log \pi'(a_t \mid s_t) + \gamma (Q_h^{\pi}(s_{t+1}, a_{t+1}) - \alpha \log{\pi'(a_{t+1} \mid s_{t+1}))} \mid s_t] \\
&= \E_{\pi', P, t+2}[r(s_t,a_t) - \alpha \log \pi'(a_t \mid s_t) + \gamma (r(s_{t+1}, a_{t+1}) - 
\alpha \log{\pi'(a_{t+1} \mid s_{t+1})}) + \\
& \quad\quad\quad\quad \gamma^2 V_h^{\pi}(s_{t+2}) \mid s_t] \\
& \vdots \\
& \leq \E_{\pi', P, T-1}[r(s_t, a_t) - \alpha \log{\pi'(a_t \mid s_t)}+ \gamma (r(s_{t+1}, a_{t+1})- \alpha \log{\pi'(a_{t+1}\given s_{t+1})}) +  \hdots \\
& \quad\quad\quad\quad\quad\quad
\hdots + \gamma^{T-1} (r(s_{T-1}, a_{T-1}) - \alpha \log{\pi'(a_{T-1}\given s_{T-1})}) \given  s_t=s] \\
&= V_h^{\pi'}(s_t).
\end{align*}

Thus for any $s_t$
\begin{align*}
Q_h^{\pi_k}(s_t,a_t)&=\E_{  s_{t+1} \sim P(\cdot \mid s_t, a_t)}[r(s_t,a_t)+\gamma V_h^{\pi_k}(s_{t+1})] \\
&\leq \E_{s_{t+1} \sim P(\cdot \mid s_t, a_t)}[r(s_t,a_t)+\gamma V_h^{\pi_{k+1}} (s_{t+1})]=Q_h^{\pi_{k+1}}(s_t,a_t).  
\end{align*}
\end{proof}
%--------------------------
%--------------------------
\subsection{Iteration}
\label{sec:policyiteration_hrl}
\begin{lemma}
\label{lem:policyiteration_sac}
Let the reward be bounded, $|\A| < \infty$, $|\s|< \infty$, $0 \leq \gamma < 1$ and $\alpha >0$. Let $\Pi$ be a set of all stationary, stochastic policies. Alternating between exact soft policy evaluation and global soft policy improvement from some initial policy $\pi_0 \in \Pi$ and soft state-action function $Q_h^{(0)}$, the process converges (in the limit) to the optimal policy $\pi^*$, satisfying $Q_h^{\pi^*} = Q_h^*$. 
\end{lemma}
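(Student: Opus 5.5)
The proof mirrors the one given for Theorem \ref{theorem:policyiteration_cdsac}, with $\E[Z_h^{\pi}]$ replaced by the scalar soft value $Q_h^{\pi}$ and Lemma \ref{lem:policyevaluation_hr} replacing the distributional contraction.

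\textbf{Step 1 (evaluation is well defined).} Fix an iteration $k$ and the policy $\pi_k$. By Lemma \ref{lem:policyevaluation_hr}, $\T_h^{\pi_k}$ is a $\gamma$-contraction in $\|\cdot\|_\infty$. Two facts make this usable here. First, the entropy of a distribution on a finite action set is bounded by $\log|\A|$. Second, the reward is bounded. Hence $\T_h^{\pi_k}$ maps the Banach space of bounded functions on the finite set $\s\times\A$ into itself. Banach's fixed point theorem then gives a unique fixed point $Q_h^{\pi_k}$, reached from any start $Q_h^{(0)}$. Every $Q_h^{\pi_k}$ obeys the uniform bound
\[
|Q_h^{\pi_k}|\le \frac{r_{\max}'+\gamma\alpha\log|\A|}{1-\gamma},
\qquad r_{\max}' := \max(|r_{\min}|,|r_{\max}|).
\]

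\textbf{Step 2 (monotone convergence).} By Lemma \ref{lem:policyimprovement_hrl_ii}, $Q_h^{\pi_k}(s,a)\le Q_h^{\pi_{k+1}}(s,a)$ for every pair $(s,a)$. Each sequence $\{Q_h^{\pi_k}(s,a)\}_k$ is therefore non-decreasing and bounded above by Step 1, so it converges. Since $|\s\times\A|<\infty$, we obtain a pointwise (equivalently uniform) limit $\bar L$. The improvement step \eqref{eq:policy_improvement_sac} has the closed-form minimizer
\[
\pi_{k+1}(\cdot\mid s)=\mathrm{softmax}\bigl(Q_h^{\pi_k}(s,\cdot)/\alpha\bigr).
\]
The KL divergence is zero exactly at this target, and $\alpha>0$ makes the map well defined. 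Softmax is continuous, so $\pi_k\to\pi^*:=\mathrm{softmax}(\bar L/\alpha)$. This limit is a stationary stochastic policy with full support, so $\pi^*\in\Pi$.

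\textbf{Step 3 (identify the limit).} I will show that $\pi\mapsto Q_h^{\pi}$ is continuous on the set of full-support policies. Write
\[
Q_h^{\pi}=(I-\gamma P^{\pi})^{-1}r_h^{\pi},
\]
where $P^{\pi}$ is the state-action transition matrix under $\pi$ and $r_h^{\pi}$ is the entropy-augmented reward. Both are continuous in $\pi$. The entropy term $-\pi\log\pi$ is continuous, including at the boundary. Finally $I-\gamma P^{\pi}$ is invertible with inverse bounded by $1/(1-\gamma)$. Hence $Q_h^{\pi_k}\to Q_h^{\pi^*}$, and so $\bar L=Q_h^{\pi^*}$. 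It follows that $\pi^*=\mathrm{softmax}(Q_h^{\pi^*}/\alpha)$, i.e. $\pi^*$ is a fixed point of soft policy improvement.

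\textbf{Step 4 (optimality).} I will show that a policy which is soft-greedy with respect to its own $Q_h$ satisfies the soft Bellman optimality equation
\[
Q(s,a)=r(s,a)+\gamma\,\E_{s'}\Bigl[\alpha\log\sum_{a'}\exp\bigl(Q(s',a')/\alpha\bigr)\Bigr].
\]
The key identity is
\[
\max_{\pi}\E_{\pi}\bigl[Q-\alpha\log\pi\bigr]=\alpha\log\sum\exp(Q/\alpha),
\]
with the maximum attained exactly at the softmax. Thus $Q_h^{\pi^*}$ is a fixed point of the soft optimality operator. That operator is itself a $\gamma$-contraction, because log-sum-exp is $1$-Lipschitz in $\|\cdot\|_\infty$, so its fixed point is unique and equals $Q_h^*$.

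To confirm that $Q_h^*$ dominates every policy, take any $\pi\in\Pi$ and apply Lemma \ref{lem:policyimprovement_hrl_i} with $\pi^*$ as the improved policy. Unrolling as in Lemma \ref{lem:policyimprovement_hrl_ii} gives $Q_h^{\pi}\le Q_h^{\pi^*}$.

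\textbf{Main obstacle.} Step 3 is the hardest part. The limit has to be handled carefully: we need $\bar L$ to equal the value of the limit policy itself, not merely a limit of values. This requires the continuity argument through $(I-\gamma P^{\pi})^{-1}$ and continuity of the entropy term. I would present this explicitly using the matrix resolvent rather than appealing to it informally.
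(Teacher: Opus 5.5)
Your proposal follows the paper's proof essentially step for step: a monotone pointwise limit via Lemma \ref{lem:policyimprovement_hrl_ii}, continuity of softmax and of $\pi\mapsto Q_h^{\pi}$ to identify $\bar L=Q_h^{\pi^*}$, and soft-greediness of $\pi^*$ with respect to its own $Q_h$ to get the soft Bellman optimality equation. You also make rigorous several points the paper only asserts, namely the uniform bound, continuity through $(I-\gamma P^{\pi})^{-1}$, and uniqueness via the log-sum-exp contraction. One small fix is needed in your domination check: Lemma \ref{lem:policyimprovement_hrl_i} cannot be invoked with $\pi^*$ as the improvement of an arbitrary $\pi$, because that would require $\pi^*=\mathrm{softmax}(Q_h^{\pi}/\alpha)$; instead, your key identity gives $\T_h^{\pi}Q_h^{\pi^*}\le Q_h^{\pi^*}$ directly, and iterating the monotone contraction $\T_h^{\pi}$ yields $Q_h^{\pi}\le Q_h^{\pi^*}$.
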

\begin{proof}
At each iteration $k$, the policy $\pi_k$ is fixed during evaluation. By Lemma \ref{lem:policyevaluation_hr}, $\T_h^{\pi_k}$ is a $\gamma$-contraction and the evaluation sequence converges to the fixed point $Q_h^{\pi_k}$. Therefore, $Q_h^{\pi_k}$ is well defined at each iteration.
By Lemma \ref{lem:policyimprovement_hrl_ii}, the sequence $\{Q_h^{\pi_k}(s_t,a_t)\}$ is non-decreasing for each pair $(s_t,a_t)$. Because the reward is bounded and $|\s| < \infty$, $\A < \infty$, the sequence converges to some point-wise limit $\bar{Q}$. Policies are updated by the softmax of the current $Q_h$ and softmax is continuous, therefore the limit point $\pi^*$ exists and is the softmax of $\bar{Q}$, i.e. $\pi^*= \text{softmax}(\bar{Q})$. 
Since $\pi^* \in \Pi$, Lemma \ref{lem:policyevaluation_hr} equally guarantees well-definedness in the limit point $Q_h^{\pi^*}$.
The map $\pi \mapsto Q_h^{\pi}$ is continuous, hence $\pi_k \rightarrow \pi^*$ implies $Q_h^{\pi_k} \rightarrow Q_h^{\pi^*}$ and therefore $\bar{Q} = Q_h^{\pi^*}$. 
It follows $\pi^* = \text{softmax}(Q_h^{\pi^*})$ (the policy is soft-greedy w.r.t. $Q_h^{\pi^*}$) and therefore $Q_h^{\pi^*}$ satisfies the Bellman optimality equation, thus $Q_h^{\pi^*} = Q_h^*$.
\end{proof}

\section{Hyperparameters}
\label{sec:hyperparameters}
Hyperparameters for the C-DSAC experiments were systematically tuned to maintain parity with the SAC baselines, ensuring an equitable comparison. 
\begin{table}[H]
\centering
\caption{C-DSAC Hyperparameters}
\begin{tabular}{ll}
\hline
\textbf{Parameter} & \textbf{Value} \\ \hline
\textit{\textbf{Shared}} &  \\
Optimizer & Adam \citep{Kingma2014AdamAM} \\
Learning rate & $3 \cdot 10^{-4}$ \\
Discount factor ($\gamma$) & 0.99 \\
Replay buffer size & $10^6$ \\
Actor architecture & (256, 256) \\
Minibatch size & 256 \\
Layer activation & ReLU \\
Target smoothing coefficient ($\tau$)  & 0.005 \\
Target update interval & 1 \\
Gradient steps & 1 \\ \hline
\textit{\textbf{SAC}} & \\
Critic architecture & (256, 256) \\ \hline
\textit{\textbf{C-DSAC}} & \\
Critic architecture & (256, 255) \\
Min., Max. Std. & 0.01, 1000 \\
Gauss grid points & 31 \\
Gauss bounds & $15\sigma$
\label{tab:cdsac_hyper}
\end{tabular}
\end{table}

\begin{table}[H]
\centering
\caption{Environment-specific Entropy}
\begin{tabular}{cc}
\hline
\textbf{Env. Name} & \textbf{$\alpha$-value} \\ \hline
Hopper-v4 & $0.2$ \\
Ant-v4 & $0.2$ \\
HalfCheetah-v4 & $0.2$ \\
Walker2d-v4 & $0.2$ \\
Humanoid-v4 & $0.05$
\label{tab:env_entropy}
\end{tabular}
\end{table}

\bibliographystyle{plain}
\bibliography{RL.bib}

\end{document}